\documentclass{article}

\usepackage{microtype}
\usepackage{graphicx}
\usepackage{caption}
\usepackage{subcaption}
\usepackage{booktabs} 
\usepackage{verbatim}

\usepackage{hyperref}

\usepackage[accepted]{icml2026}

\usepackage{amsmath}
\usepackage{amssymb}
\usepackage{mathtools}
\usepackage{amsthm}

\usepackage[capitalize,noabbrev]{cleveref}

\theoremstyle{plain}
\newtheorem{theorem}{Theorem}[section]

\newtheorem{corollary}[theorem]{Corollary}
\theoremstyle{definition}

\theoremstyle{remark}

\usepackage[textsize=tiny]{todonotes}

\icmltitlerunning{From Exact Hits to Close Enough: Semantic Caching for LLM Embeddings}

\begin{document}

\twocolumn[
\icmltitle{From Exact Hits to Close Enough: Semantic Caching for LLM Embeddings}



\icmlsetsymbol{equal}{*}

\begin{icmlauthorlist}
\icmlauthor{Dvir David Biton}{equal,yyy}
\icmlauthor{Roy Friedman}{equal,yyy}
\end{icmlauthorlist}

\icmlaffiliation{yyy}{Department of Computer Science, Technion – Israel Institute of Technology
, Haifa, Israel}

\icmlcorrespondingauthor{Dvir David Biton}{dbiton@campus.technion.ac.il}
\icmlcorrespondingauthor{Roy Friedman}{roy@technion.ac.il}

\icmlkeywords{Machine Learning}

\vskip 0.3in
]



\printAffiliationsAndNotice{}  

\begin{abstract}
The rapid adoption of large language models (LLMs) has created demand for faster responses and lower costs. Semantic caching, reusing semantically similar requests via their embeddings, addresses this need but breaks classic cache assumptions and raises new challenges. In this paper, we explore offline policies for semantic caching, proving that implementing an optimal offline policy is NP-hard, and propose several polynomial-time heuristics. We also present online semantic aware cache policies that combine recency, frequency, and locality. Evaluations on diverse datasets show that while frequency based policies are strong baselines, our novel variant improves semantic accuracy. Our findings reveal effective strategies for current systems and highlight substantial headroom for future innovation. All code is open source.
\end{abstract}

\section{Introduction} 
\label{sec:introduction}

Modern large language models (LLMs) rely on embedding methods to encode the semantic meaning of user queries into high-dimensional vectors. 
These embeddings enable efficient comparison of queries, making them a cornerstone of natural language processing tasks, such as question answering and semantic search.
LLMs have gained wide adoption in the last few years although they suffer from high computational cost, as well as significant consumption of energy, memory, storage, and communication bandwidth.

Caching is a known technique to alleviate such limitations, and has been applied successfully in a variety of domains.
Recently there have been several attempts to apply caching to LLMs~\cite{GPTCache,MeanCache,SSSDCD23}.
By encoding each query into an embedding and comparing it against cached embeddings, semantically similar requests are detected, which enables the system to return the previously computed answer, avoiding redundant inference and reducing overall latency and compute cost.
Most of these works adopt a na\"ive or ad-hoc approach to managing the cache, i.e., for deciding which embeddings should be inserted into the cache, and once the cache is full, which should be evicted to make room for new ones.
This is despite evidence from other domains that clever cache management can significantly improve caching benefits~\cite{ARC,FRD,TinyLFU}.

In particular, it is known that different workloads may yield significantly different hit rates depending on the cache management policy chosen.
This has led to a plethora of cache management policies~\cite{LRU,WLFU,LFUAGING} as well as to various adaptive schemes and learning based schemes.
However, a universally optimal cache management policy remains an open research challenge.

Since semantically similar queries can be answered by the same response, despite the fact that syntactically they are not exactly the same, most LLM caching solutions rely on \emph{semantic caching}, in which a query is considered a hit if the cache already includes the reply to some query such that the embedding vectors of both queries are ``close enough'' under some metric such as L2 norm or cosine similarity.

The impact of this shift from exact match caching to semantic caching on the vast knowledge about caching and cache replacement policies is not clear a priori.
Hence, in this work we take a systematic approach to studying LLM cache management and, in particular, what changes, if any, are needed when moving from exact matches to semantic~caching.

\subsection{Related Work}

Recent works, such as GPTCache~\cite{GPTCache}, have demonstrated the benefits of applying caching
to LLM inference.
GPTCache can be integrated with multiple embedding models, vector stores and cache management schemes.
It supports LRU, LFU, FIFO, and~Random by default.

MeanCache is another recent cache for LLMs~\cite{MeanCache}, which applies lighter models than GPTCache.
The work utilizes LRU as cache replacement, at least by default.
For most of the workloads we have experimented with in this work, LRU is not a good cache management policy.

GPTSemCache also applies caching to avoid repeated LLM API invocations~\cite{GPTSemCache}.
They report a reduction in API calls by up to 68.8\% across
various query categories, with cache hit rates ranging from 61.6\% to 68.8\%, while achieving high accuracy.
GPTSemCache relies on Redis to store cached data without any hard cache size limit, and therefore does not consider any replacement strategy.

vCache~\cite{vCache} employs an online learning algorithm to estimate an optimal threshold for each cached prompt similarity bound.
They assume an infinite cache size, rendering the replacement policy irrelevant.
When the cache size matters, our work becomes complementary to vCache and can be integrated with it.
Similarly, InstCache~\cite{InstCache} and GenerativeCache~\cite{IKKM25} are additional LLM caching mechanisms that assume an infinite cache and therefore do not apply any replacement policy.

The work of~\cite{ZSZBJJ23} studied how to minimize the cost of a cache based LLM system by learning the expected cost of various queries processing time, and integrating it with the Least Expected Cost policy and the Greedy Dual-Size with Frequency (GDSF) replacement algorithm.
They assume an oracle that clusters semantically similar queries into equivalence groups, thereby ignoring the issue of having multiple cached vectors that can be considered a hit for the same query.
In that sense, their work can be compared to our ClusterLFU and CRVB policies, which we have shown to be often suboptimal.

RAGCache~\cite{RAGCache} caches RAG contents.
It applies a novel prefix-aware Greedy-Dual-Size-Frequency replacement policy, and compares it to GDSF, LRU and~LFU.

Proximity~\cite{BJKPPRV25} is another RAG cache project.
The focus in Proximity is reducing the cache search time, which is obtained by dividing the cache into independent buckets, or sets, and applying a mapping mechanism like locality-sensitive hashing (LSH) that ensures that semantically similar vectors are placed in the same bucket.
Then each bucket can be managed using its own policy.
Currently, Proximity supports LRU and FIFO.
Our work is therefore complementing to~\cite{BJKPPRV25}, and our results can be applied for better selection of the replacement policies of the buckets of Proximity.

Let us note that caching high-dimensional vectors for similarity search is not exclusive to NLP. It is also used in many applications such as recommender systems, reverse image search, and biological sequence analysis.

\subsection{Summary of Contributions}
We show that the optimal clairvoyant caching algorithm, also known as Belady’s OPT or MIN, is no longer optimal in the context of semantic caching, and that computing the corresponding clairvoyant solution, which we term VOPT, is NP-hard.
To address this, we propose and analyze three polynomial-time clairvoyant heuristics: Clustered Relaxed Vector Belady (CRVB), Volume Greedy Relaxed Vector Belady (FGRVB), and Time Greedy Relaxed Vector Belady (RGRVB).
We compare these clairvoyant approaches against a broad range of established online cache management policies, adapted for semantic caching, across nine publicly available workloads.
Our results show that frequency-based algorithms generally outperform other online approaches, with our novel SphereLFU achieving the highest accuracy.
Also, in most workloads, the clairvoyant algorithms significantly outperform online policies, suggesting substantial remaining opportunities for~innovation.


\section{Preliminaries}
\label{sec:sem-cache}

\subsection{Semantic Caching}


%

A \emph{semantic (vector) cache} $C$ is parameterized by ($i$) $Dim$ - the dimensionality of each vector, ($ii$) $N$ - the maximum number of vectors the cache can store, and ($iii$) $D_{\text{thresh}}$ - a distance threshold. A stored vector within this distance from a request is considered a \emph{cache hit}.
Each stored vector is a tuple of $Dim$ floats. In our evaluation, all vectors are L2-normalized to have unit magnitude, although this is not strictly required.
%
%
The cache exposes two basic methods:
\begin{description}
    \item[Query:] Given a vector $V$ and an integer $M$, return the $M$ or less vectors that are closest to $V$, restricted to those within a distance smaller than $D_{\text{thresh}}$.
    \item[Update:] Given a vector V, the cache is requested to store V. If the cache already contains $N$ vectors, inserting the new vector requires evicting one of the stored vectors from the cache. The cache's management policy decides which vectors are stored and evicted.
\end{description}

For efficiency reasons, our cache implementation supports batches, i.e., querying for multiple vectors using a single request and caching multiple vectors with a single~request.

\subsection{Embedding}
\label{sec:embedding}

Embedding methods aim to project high-dimensional textual data into vector representations that preserve semantic relationships. Sentence-BERT (SBERT)~\cite{SBERT} is a widely used model of this kind, which offers a balanced tradeoff between semantic clustering quality and runtime efficiency. In our experiments, we normalize the embeddings so that their magnitude equals 1, and specifically use the all-MiniLM-L6-v2 variant, which produces 384-dimensional vectors.

\subsection{Embeddings Similarity}
\label{sec:embeddings similarity}
Embedding similarity is commonly measured using cosine similarity, with L2 distance serving as a closely related alternative~\cite{FAISS}. 

In this work, we compute similarity using  L2 distance. Our embeddings are normalized to have a magnitude of 1, therefore cosine similarity and L2 distance become equivalent up to a monotonic transformation, and the choice between them has no effect on the nearest neighbor results.

For normalized embeddings, the L2 distance directly reflects angular similarity. Identical embeddings have distance $0$, while completely opposite embeddings have distance $2$. A distance of $\sqrt{2}$ corresponds to orthogonal vectors, i.e., unrelated vectors. Distances between $0$ and $\sqrt{2}$ indicate increasing alignment: for example, an euclidean distance of 0.5 corresponds to a cosine similarity of about 0.875, meaning the vectors are close but not identical. Conversely, distances beyond $\sqrt{2}$ indicate increasing opposition, with values approaching $2$ representing strong dissimilarity.

In this work, we utilized three normalized L2 distance thresholds ($D_{thresh}$): 0.5, 0.7, and 0.9, which correspond to cosine similarity scores of approximately 0.88, 0.75, and 0.60, respectively. While similarity thresholds are data dependent, we selected these values based on established benchmarks: 0.88 targets strict near-duplicate detection, 0.75 serves as a balanced baseline for semantic equivalence, and 0.60 functions as a high-recall filter for broader topical relevance~\cite{SBERT, SemEval}.

\section{Theoretical Analysis}
\label{sec:rvb}

\subsection{OPT}
\label{sec:opt}

OPT, also called Belady’s optimal or MIN, is the clairvoyant cache policy that, upon a miss with a full cache, evicts the resident item whose next request occurs farthest into the future~\cite{BP74}. This rule provably minimizes total misses for any fixed capacity on a given finite request trace. Classic exchange-argument proofs show that any deviation cannot improve the miss count. Because it requires perfect future knowledge, OPT is not implementable online, but it serves as an oracle upper bound for evaluation and as a target for online policies. Computing OPT for a cache sized $N$ and a known in advance trace of length $n$ requires $O(nlogN)$. OPT defines the achievable frontier against which real policies are compared.

As stated before, in semantic caching, a hit occurs if the requested vector lies within $D_{\text{thresh}}$ of a cached vector.
This heuristic breaks OPT’s optimality: consider a workload with two disjoint regions of requests, one dense and popular and the other sparse and unpopular. An OPT policy translated directly would fill the cache with vectors from the dense region, although only a handful are needed to cover that region. On the other hand, the sparse region will suffer unnecessary misses. This means we must redefine an optimal algorithm for semantic caching.

\subsection{VOPT}

We define VOPT as an offline policy that achieves the maximum hit rate for a given semantic cache and workload. Different policies may achieve the maximum hit rate, and we would consider all of them to be an instance of VOPT.

\begin{theorem}[Hardness of Computing VOPT]
\label{thm:vopt_mcp}
Let $D_{\text{thresh}} > 0$ and let $k$ be the cache capacity.  
Given a request sequence of vectors, computing VOPT is NP-hard.
\end{theorem}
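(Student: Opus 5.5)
We prove NP-hardness by a polynomial reduction from a known NP-hard covering problem; the label \texttt{vopt\_mcp} points to Maximum Coverage. The cleaner choice is the unit-disk (or unit-ball) version of Dominating Set / Discrete $k$-Center: given points $P=\{p_1,\dots,p_m\}$ in $\mathbb{R}^2$ and an integer $k$, decide whether some $k$ points of $P$ satisfy that every point of $P$ lies within distance $r$ of a chosen one. This is NP-hard even in the plane (Clark, Colbourn, Johnson). By scaling, take $r$ to be anything strictly below $D_{\text{thresh}}$, with all non-adjacent pairs at distance at least $D_{\text{thresh}}$. The unit-norm convention in the paper is not required by the model ("not strictly required"). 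If we want it anyway, we embed the planar instance on a small spherical cap, where distances are distorted by a controllable factor, and use the integrality gap between "$< r$" and "$\ge D_{\text{thresh}}$" to absorb the distortion.

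The construction of the request trace has two phases.
\begin{itemize}
\item \textbf{Loading phase.} Request each $p_i$ once, in any order. Only requested vectors are ever inserted into the cache, so the cache content at any time is a subset of $P$ of size at most $k$.
\item \textbf{Test phase.} Request each $p_i$ again $L$ times, with $L = m+1$, interleaved round-robin.
\end{itemize}
A policy's hits split into loading-phase hits, of which there are at most $m$, and test-phase hits. If the cache holds a set $S$ with $|S|\le k$ that dominates $P$, then every test request hits. This yields at least $Lm$ hits, and the policy never needs to modify the cache during the test phase, since we let it refrain from inserting on a hit.

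The key technical step is the converse: if no $k$-subset dominates $P$, then every policy loses at least $L$ test-phase hits. The simple argument needs care, because a policy may change its cache during the test phase by inserting missed vectors.
\begin{itemize}
\item \textbf{Per-round loss.} Consider any round of the round-robin, in which each $p_i$ is requested once. If the round produced $m$ hits, the cache would never change within it, since insertion happens only on a miss, or at least we may assume so without loss. The fixed cache content would then dominate $P$, a contradiction. So each round has at least one miss.
\item \textbf{Hit count.} With $L$ rounds there are at most $L(m-1)$ test hits, and the total is at most $m + L(m-1) < Lm$ because $L>m$.
\end{itemize}
Thus the maximum hit count, which is what VOPT computes, is at least $Lm$ exactly when the dominating set exists. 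Computing VOPT therefore decides the NP-hard problem. The trace length is $m(L+1)=O(m^2)$, so the reduction is polynomial.

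The main obstacle is the insertion semantics. We must check that hits never force insertions, or, if the model allows admitting a vector on a hit, that the per-round argument still holds. It does: within a round with no miss the cache is unchanged unless the policy voluntarily swaps contents. Even then, at each time the current cache must cover the current request. To close this gap we shorten each round's requests so that a swap between two consecutive requests costs nothing but cannot help: any cache configuration sequence covering all $m$ points in a round without a miss only ever contains previously requested vectors. The covering then comes from a sliding family of $k$-subsets, and we bound it by charging each point to the cache state at its request. If this charging fails, the fallback is to assume the standard demand-paging convention (caches change only on misses), which the Update description in the preliminaries supports. The second obstacle is realizing the geometric instance with exact distances relative to $D_{\text{thresh}}$ in high dimension $Dim$. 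Since unit-disk graphs embed with rational coordinates at polynomial precision, this is routine.
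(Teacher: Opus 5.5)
There is a genuine gap, and it is the issue you flag as the main obstacle. Your points serve both as the vectors to be cached and as the vectors to be covered, and they cover each other. As a result, your two directions need incompatible cache semantics, and neither convention makes the reduction from plain Dominating Set correct.

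\emph{Convention 1: the cache changes only on a miss, with optional bypass.} Your per-round argument is then valid. However, every inserted vector was a miss when it was inserted, so no cached vector covers it. Hence the cache is at all times an independent set of the unit-disk graph. Your YES direction (``the cache holds $S$'') therefore fails whenever $\gamma(G)\le k<i(G)$, where $i$ is the independent domination number. A concrete counterexample is a double star (two adjacent centres, each with two pendant leaves), realised as a unit disk graph with $k=2$. It has $\gamma=2$ but $i=3$, so every test round contains a miss and the optimum is below $Lm$ on a YES instance.

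\emph{Convention 2: the policy may also insert on a hit.} Your YES direction then works, but the claim that you may assume the cache is unchanged during an all-hit round is false. Take $C_7$ realised as a regular heptagon whose side is below $D_{\text{thresh}}$ and whose diagonals are above it. Let $k=2$ and let the round-robin follow the cycle. After loading, keep $p_7$; on each test hit, replace the entry that served it by the current request. This policy hits every test request, yet $\gamma(C_7)=3$. Your ``sliding family/charging'' paragraph cannot repair this, because the example shows a sliding window genuinely beats every fixed $k$-subset.

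Within your framework, the cleanest repair is to adopt the miss-only convention and reduce from Independent Dominating Set. Clark, Colbourn and Johnson prove this NP-complete on unit disk graphs in the same paper you cite. Both of your directions then go through as written, and you even get hardness in dimension $2$.

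The paper sidesteps the problem by separating the two roles. Set vectors are pairwise non-covering, so every Phase~1 request misses and any $k$ of them can be retained. Element vectors are pairwise non-covering and requested once, so Phase~2 insertions are useless and evictions only hurt. This makes its argument insensitive to the admission convention, at the price of dimension $m+n$. It also makes Phase~2 hits equal the coverage value exactly, which the inapproximability corollary relies on. Your gap between $Lm$ and $m+L(m-1)$ is only a $1-O(1/m)$ factor and gives no constant-factor hardness.

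A minor point: unit disk graphs do not in general admit polynomial-precision realisations, contrary to your last remark. You do not need this, since your source problem takes the point coordinates as input.
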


\begin{proof}
We reduce from the Maximum Coverage Problem (MCP).  
In MCP, we are given sets $S_1, \dots, S_m$ over a ground set $U = \{e_1,\dots,e_n\}$ and an integer $k$.  
The goal is to choose at most $k$ of the sets whose union covers the largest number of elements~\cite{MCP}.

\paragraph{Requests.}
We construct vectors $u_S$ (sets) and $v_e$ (elements) in $\mathbb{R}^{m+n}$ in polynomial time (see Appendix~\ref{sec:vector_construction}). For distinct sets $S, S'$ and elements $e, e'$:
\begin{enumerate}
    \item $d(u_S, u_{S'}) > D_{\text{thresh}}$ and $d(v_e, v_{e'}) > D_{\text{thresh}}$.
    \item $d(u_S, v_e) \le D_{\text{thresh}} \iff e \in S$.
\end{enumerate}

\paragraph{Request sequence.}
We define a two-phase request sequence:

\begin{itemize}
    \item \textbf{Phase 1:} Request all set vectors $v_{S_1},\dots,v_{S_m}$.  
    All requests are misses, but the policy may keep up to $k$ set vectors after Phase~1.

    \item \textbf{Phase 2:} Request all element vectors $v_{e_1},\dots,v_{e_n}$.  
    A request to $v_{e_i}$ is a hit iff some cached $v_{S_j}$ corresponds to a set containing $e_i$.
\end{itemize}

\paragraph{Correctness.}
The number of hits in Phase~2 equals the number of elements in the union of the $k$ sets associated with the cached set vectors.  
Thus, an optimal VOPT policy chooses the same $k$ sets that achieve optimal maximum coverage.
Since MCP is NP-hard, so is computing VOPT.
\end{proof}

\begin{corollary}[Inapproximability of VOPT]
\label{cor:vopt_inapprox}
Unless $\mathrm{P}=\mathrm{NP}$, no polynomial-time algorithm can approximate VOPT's hit ratio within a factor better than $(1 - 1/e)$.
\end{corollary}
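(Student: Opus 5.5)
The plan is to transfer Feige's inapproximability result for Maximum Coverage through the reduction in the proof of Theorem~\ref{thm:vopt_mcp}. Feige's result says that, unless $\mathrm{P}=\mathrm{NP}$, no polynomial-time algorithm approximates MCP within a factor $1-1/e+\epsilon$ for any $\epsilon>0$. I will assume this as the known hardness result, as the paper does by citing~\cite{MCP}.

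The central step is to show that the reduction is \emph{approximation preserving}, with no additive loss. Given an MCP instance, build the two-phase request sequence of Theorem~\ref{thm:vopt_mcp}. Phase~1 consists of $m$ requests to the set vectors $u_{S_1},\dots,u_{S_m}$. These are pairwise farther apart than $D_{\text{thresh}}$, so every one of them is a miss for every policy. Hence any policy's hits arise only in Phase~2, and there is exactly one hit per element $e$ that is covered by some set vector still in the cache.

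The subtle point, which I expect to be the main obstacle, is that during Phase~2 a policy may also insert element vectors and evict set vectors. This could in principle produce hits not accounted for by a fixed family of $k$ sets. I handle it in two ways:
\begin{itemize}
\item Element vectors are pairwise farther apart than $D_{\text{thresh}}$ and each is requested once. So a cached element vector never causes a hit.
\item Evicting a set vector can only reduce future hits.
\end{itemize}
It follows that the number of hits of any policy is at most the coverage $|\bigcup_{S\in\mathcal{F}} S|$, where $\mathcal{F}$ is the family of at most $k$ sets resident at the end of Phase~1.

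Conversely, from any policy's run we can read off $\mathcal{F}$ in polynomial time, and its coverage is at least the hit count. Applying the same argument to an optimal policy gives $\mathrm{VOPT}=\mathrm{OPT}_{\mathrm{MCP}}$.

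To conclude, suppose a polynomial-time algorithm achieved hits $\ge \alpha\cdot \mathrm{VOPT}$ with $\alpha>1-1/e$. Extracting $\mathcal{F}$ would then yield a cover of size $\ge \alpha\cdot\mathrm{OPT}_{\mathrm{MCP}}$, contradicting Feige. This transfers the factor to the number of hits. The statement is about the hit ratio, but the hit ratio equals hits divided by the trace length $m+n$, which is the same for every policy on a given instance. The multiplicative factor is therefore unchanged.

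One caveat to verify is that the algorithm may output only a hit value rather than a policy. For the value version, the same argument shows that estimating the optimal value within $\alpha$ would estimate the optimal MCP value within $\alpha$. Feige's gap version, which distinguishes instances with coverage $\mathrm{OPT}$ from those with coverage $<(1-1/e+\epsilon)\mathrm{OPT}$, is also NP-hard, so the value version is covered too. Finally, I will check that the vector construction of Appendix~\ref{sec:vector_construction} uses polynomially bounded dimension and bit-precision, so that the reduction runs in polynomial time.
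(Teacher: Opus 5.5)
Your proposal is correct and follows the same route as the paper. The two-phase trace from Theorem~\ref{thm:vopt_mcp} is value-preserving, so Feige's $(1-1/e)$ hardness for Maximum Coverage transfers directly, and you fill in details the paper's three-line argument leaves implicit: Phase-2 insertions and evictions, normalization by the fixed trace length $m+n$, and the value-only version via the gap formulation.

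The one step you assert without argument is $\mathrm{VOPT}\ge\mathrm{OPT}_{\mathrm{MCP}}$. It follows by caching an optimal family in Phase~1 and declining to admit any element vector in Phase~2. That step needs the model's freedom to bypass insertion, which the paper's Update semantics and its admission-based policies such as RGRVB permit; under forced insertion on every miss, the equality can fail.
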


\begin{proof}
The reduction in Theorem~\ref{thm:vopt_mcp} is approximation-preserving: the optimal number of Phase~2 hits equals the maximum coverage value of the MCP instance.  
MCP is known to be NP-hard to approximate to within a factor better than $(1 - 1/e)$~\cite{MCP}.  
Thus, any algorithm approximating VOPT more closely would imply an algorithm approximating MCP more closely, contradicting known hardness results.
\end{proof}

\begin{theorem}[Well-Definedness and Computational Infeasibility of VOPT]
\label{thm:vopt_defined}
For any request sequence and threshold distance $D_{\text{thresh}}$, VOPT is well-defined: an optimal selection of up to $k$ cached vectors that maximizes semantic hits always exists.  
However, computing VOPT requires solving an NP-hard optimization problem and is therefore computationally infeasible for realistic workloads.
\end{theorem}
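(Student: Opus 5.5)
The plan has two parts: existence by finiteness, and infeasibility by appeal to Theorem~\ref{thm:vopt_mcp}.

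\textbf{Existence.} Fix a finite request sequence $r_1,\dots,r_n$, a threshold $D_{\text{thresh}}$ and a capacity $k$. A cache policy on this trace is fully described by the cache contents after each request. The cache can only hold vectors that have already been requested, so each content is a subset of $\{r_1,\dots,r_n\}$ of size at most $k$. A policy is therefore a sequence of $n$ such subsets, subject to the feasibility constraints:
\begin{itemize}
\item an inserted vector must have been requested at that step;
\item the capacity bound $k$ must hold at every step.
\end{itemize}
The set of feasible policies is therefore finite; a crude bound is $\binom{n}{\le k}^n$. It is also nonempty, since the policy that never caches anything is feasible.

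Given the policy, the hit indicator at step $t$ is a deterministic function of the current contents, namely whether some cached vector lies within $D_{\text{thresh}}$ of $r_t$. The hit count is then an integer between $0$ and $n$. A function on a finite nonempty set attains its maximum, so a maximizing policy exists. By definition this policy is an instance of VOPT, so VOPT is well-defined. Ties are allowed, matching the paper's convention that any maximizer counts as VOPT.

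One subtlety needs care. If cached vectors were allowed to be arbitrary points of $\mathbb{R}^{Dim}$ rather than past requests, the search space would be infinite. I would address this either by adopting the paper's model, in which Update stores the requested vector, or by noting a replacement argument. A cached point only matters through the set of future requests it covers. There are finitely many such coverage sets, each realized by some point, so the maximum is still attained.

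\textbf{Infeasibility.} This part follows directly from Theorem~\ref{thm:vopt_mcp}. Computing a VOPT policy yields the optimal hit count, and through the Phase~1/Phase~2 construction that count solves Maximum Coverage. So a polynomial-time algorithm for VOPT would give P$=$NP. Corollary~\ref{cor:vopt_inapprox} further rules out even near-optimal approximation beyond $1-1/e$. I would then state that exhaustive search over the finite policy space is exponential in $n$ and $k$, which justifies "infeasible for realistic workloads."

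The main obstacle is only the modeling point in the existence step, namely pinning down the space of allowed cache states so that it is finite. The rest is routine.
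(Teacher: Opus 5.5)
Your proposal is correct and follows the same route as the paper: existence from the finiteness of the space of cache configurations, and infeasibility from the MCP reduction of Theorem~\ref{thm:vopt_mcp}, which the paper's one-line proof leaves implicit. Your version is also more careful: you enumerate \emph{sequences} of cache states, which is what the dynamic definition of VOPT requires, whereas the paper's argument over single subsets of size at most $k$ really only covers the static variant.
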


\begin{proof}
Well-definedness follows from the fact that the request sequence is finite and there are finitely many subsets of size at most $k$ that the cache may contain.  
Each such subset induces a unique number of semantic hits, and an optimal choice exists.
\end{proof}

\begin{corollary}[Approximability of the Static Formulation]
\label{cor:static_approx}
Consider the static variant of the problem where the cache contents $S$ ($|S| \leq k$) are selected and inserted prior to processing the request sequence and remain fixed. In this setting, the objective function is monotone submodular, and a greedy selection strategy achieves an approximation ratio of $(1 - 1/e) \approx 0.632$.
\end{corollary}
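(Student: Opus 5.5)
The plan is to recast the static variant as a coverage function and then apply the classical Nemhauser--Wolsey--Fisher guarantee for greedy maximization of a monotone submodular function under a cardinality constraint.

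\emph{Objective as coverage.} Let the request sequence be $r_1,\dots,r_T$ and let $V$ be the finite ground set of candidate vectors to cache. For each $x\in V$ define its ball of coverage
\[
B(x)=\{t : d(x,r_t)\le D_{\text{thresh}}\}.
\]
Since $S$ is fixed before processing, request $r_t$ is a hit iff $t\in\bigcup_{x\in S}B(x)$. The objective is therefore
\[
f(S)=\Bigl|\bigcup_{x\in S}B(x)\Bigr|,
\]
with multiplicities handled automatically because each time index is a distinct element. Under this formulation the problem is exactly an instance of Maximum Coverage: the sets are $B(x)$ and the universe is the set of time indices. This mirrors the reduction in Theorem~\ref{thm:vopt_mcp}, but in the opposite direction.

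\emph{Monotonicity and submodularity.} Monotonicity is immediate: adding a vector can only enlarge the union. For submodularity, take $A\subseteq B'\subseteq V$ and $x\notin B'$. The marginal gain is
\[
f(A\cup\{x\})-f(A)=\Bigl|B(x)\setminus\bigcup_{y\in A}B(y)\Bigr|.
\]
The union over $A$ is contained in the union over $B'$, so this gain is at least the gain of $x$ relative to $B'$. Hence $f$ has diminishing returns. We also have $f(\emptyset)=0$.

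\emph{Greedy guarantee.} Run greedy: starting from $S_0=\emptyset$, repeatedly add the vector with the largest marginal gain, for $k$ steps. Let $OPT$ be an optimal set. By monotonicity and submodularity,
\[
f(OPT)-f(S_i)\le \sum_{x\in OPT}\bigl(f(S_i\cup\{x\})-f(S_i)\bigr)\le k\bigl(f(S_{i+1})-f(S_i)\bigr).
\]
This gives
\[
f(OPT)-f(S_{i+1})\le\Bigl(1-\frac1k\Bigr)\bigl(f(OPT)-f(S_i)\bigr).
\]
Iterating $k$ times yields $f(S_k)\ge\bigl(1-(1-1/k)^k\bigr)f(OPT)\ge(1-1/e)f(OPT)$. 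One could simply cite this standard result rather than rederive it.

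\emph{Main subtlety.} The main issue is modeling rather than mathematics: we must specify the candidate set $V$ and the hit semantics. If arbitrary points of $\mathbb{R}^{Dim}$ were allowed, the ground set would be infinite. We will fix $V$ to be the set of requested vectors, which is the natural choice since only requested vectors get inserted; any finite candidate set works equally well. We must also state that in the static model every request counts as a hit or miss only against $S$, with no insertions during processing. That assumption is what makes $f$ a pure coverage function. Finally, the greedy algorithm runs in polynomial time, $O(kT|V|)$ distance evaluations, which confirms it is a polynomial-time $(1-1/e)$-approximation. By Corollary~\ref{cor:vopt_inapprox}, this ratio is also tight.
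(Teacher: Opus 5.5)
Your proposal is correct and takes the same route as the paper. The paper observes that, without evictions, hit maximization is exactly Maximum Coverage, so the objective is monotone submodular, and it cites the Nemhauser--Wolsey--Fisher greedy bound; you make the coverage function, the candidate set, and the greedy analysis explicit where the paper leaves them implicit.

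One small caveat concerns your closing tightness aside, which is not part of the statement. Tightness does not follow from Corollary~\ref{cor:vopt_inapprox}, because that corollary concerns the dynamic VOPT. For the static variant, tightness follows directly from embedding an MCP instance, with set vectors as candidates and element vectors as requests. With your choice of $V$ as the requested vectors, the element vectors also become candidates, so that embedding needs a small modification, for example repeating each element request many times.
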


The reduction from the Maximum Coverage Problem (MCP) in Theorem~\ref{thm:vopt_mcp} reveals a fundamental structural property: if the cache admits no evictions (i.e., the content is static), the problem of maximizing hits is mathematically equivalent to MCP. The objective function of MCP exhibits diminishing returns, a property formally known as submodularity. It is known~\cite{submod} that for such problems, iteratively selecting the element that covers the most uncovered items yields the optimal $(1 - 1/e)$ approximation.

However, this strict theoretical guarantee does not extend to the general (dynamic) semantic caching problem. In dynamic settings, an eviction decision at time $t$ constrains the feasible cache states at time $t+1$, creating temporal dependencies that break the submodularity of the global objective function. Nevertheless, the static relaxation serves as the theoretical foundation for our offline heuristics. By treating the dynamic problem as a sequence of local maximum coverage problems, where we attempt to maximize the number of future hits covered by the current cache state, we leverage the strong performance of greedy strategies on submodular functions, even in the absence of worst-case guarantees.

\subsection{VOPT heuristics}

We propose 3 heuristics of VOPT.
In our measurements, as reported below, we have found that different ones obtain slightly higher or lower hit rate based on the workload and cache size.
Obviously, since these are offline schemes, meant to establish an upper bound on the best attainable performance, and they are computationally efficient, one can simply implement all three and take the best one at any given data point.

\subsubsection{Clustered Relaxed Vector Belady}
\label{sec:CRVB}

As mentioned before, the challenge in adapting OPT to semantic caches is that multiple cached vectors may cover the same future requests, leading to coverage redundancy.
One method to utilize our clairvoyant knowledge of all requests is to divide them into semantically identical clusters, where the maximum distance between two vectors in the same cluster is $D_{\text{thresh}}$.
To create the clusters, we first construct a graph $G$ in which each node represents one embedding vector.
An edge is added between two nodes if their embeddings are semantically identical, i.e., if their distance is below the threshold $D_{\text{thresh}}$.

Since any cluster of semantically identical embeddings corresponds to a clique in $G$, the task of minimizing the number of clusters reduces to finding the Minimum Clique Cover (MCC). This problem is NP-hard, and can be solved by coloring the complement graph of $G$, denoted~$\overline{G}$~\cite{MCC_Solve}. 
A useful heuristic is greedy coloring with DSATUR, which strikes a balance between accuracy and runtime. However, we found that for this use case, the complement graph is too large and dense for the coloring method to be practical. 
Instead, we adopt a simple greedy approach: iteratively extracting maximal cliques. 
While this method yields lower quality partitions, it produces sufficiently good results in significantly lower runtime, particularly when the graph is sparse as in our case.

After clustering, the problem reduces to classic caching with exact matching, where each embedding is represented by its cluster ID. In this setting, OPT is provably the optimal policy and thus achieves the highest possible hit rate among policies that rely solely on clusters. If embeddings form true equivalence classes, i.e., if the similarity relation is transitive ($d(v_i, v_j) \leq D_{\text{thresh}} \land d(v_i,v_k) \leq D_{\text{thresh}} \;\Rightarrow\; d(v_j, v_k) \leq D_{\text{thresh}}$), then clustering perfectly captures semantic identity, and CRVB yields optimal results. 
A further consequence of this property is that finding the Minimum Clique Cover becomes trivial, since each connected component is a clique.

In practice, we encountered cases where a vector $v_i$ is semantically equivalent to both $v_j$ and $v_k$, but $v_j$ and $v_k$ are not equivalent.
Thus, CRVB is not optimal in the general case, and is used as a baseline oracle.

\subsubsection{Frequency Greedy Relaxed Vector Belady}
\label{sec:FGRVB}

FGRVB attempts to approximate the optimal policy by focusing on the quantity of future hits. This approach is motivated directly by the reduction to MCP.

Although we cannot guarantee the $(1-1/e)$ bound in the dynamic setting, FGRVB implements the greedy heuristic for submodular maximization as an online eviction policy. FGRVB ranks cached embeddings by their marginal contribution to the total number of future hits. On a cache miss, FGRVB counts the number of future unique hits attributed to the missed request, i.e., hits that are only covered by said embeddings. It replaces the cached item with the least unique covered future hits, given that it covers fewer hits compared to the missed item.

Let $R_{t:} = (r_t, r_{t+1}, \dots)$ be the sequence of all future requests. For a cached vector $v$, we define its volume score as the cardinality of the subset of future requests it covers:
\begin{equation}
    \text{Vol}(v) = |\{ r_i \in R_{t:} : d(v, r_i) < D_{\text{thresh}} \}|
\end{equation}

To handle redundancy, a strict implementation calculates the \emph{marginal} gain, i.e., how many future requests $v$ covers that are not already covered by other cached items. Upon a cache miss with a full cache, FGRVB evicts the vector with the lowest marginal volume score. In essence, the policy attempts to maintain the set of $k$ vectors that "cover" the maximum possible number of remaining requests in the trace, regardless of when those requests occur.

\subsubsection{Recency Greedy Relaxed Vector Belady}
\label{sec:GRVB}

To address the fact that vectors from one cluster may also cover vectors in another, we developed RGRVB. The core idea is based on the cover relation, which expresses the potential future hits of an embedding vector:
$$
\text{cover}(r_i) = \{\, r_j \;:\; d(r_i,r_j) < D_{\text{thresh}}, \; j > i \,\},
$$
where $d(\cdot,\cdot)$ denotes the embedding distance. For each embedding vector $r_i$ at time $t$ (the index of the last request made to the cache), we define
$$
\text{NextCover}(r_i,t) = \min \{\, j \;:\; j > t,\; r_j \in \text{cover}(r_i) \,\},
$$
with $\text{NextCover}(r_i,t) = \infty$ if no such $j$ exists. The cache cover is then
$$
\text{CacheCover}(C,t) = \{\, \text{NextCover}(r_j,t) \;\mid\; r_j \in C \,\}.
$$
While the cache is not yet full, every embedding vector is admitted. After the cache reaches capacity, a new embedding $r_i$ is accepted only if its next cover is not already served:
$$
\text{NextCover}(r_i,t) \notin \text{CacheCover}(C,t).
$$
If admitted, $r_i$ replaces the embedding vector whose next hit lies furthest in the future. We consider the next hit rather than all future hits; while embeddings may cover requests far ahead, by the time those requests occur the cache has changed enough to make such predictions unreliable. Also, empirically, focusing on the next hit yields better results.




\begin{figure*}[ht!]
\begin{subfigure}{0.9\linewidth}
    \includegraphics[width=\linewidth]{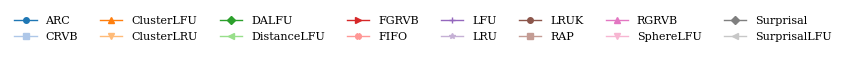}
    \label{fig:legend-0}
\end{subfigure}
  \vspace{0.5em}
  \centering
  \begin{subfigure}{0.31\linewidth}
    \includegraphics[width=\linewidth]{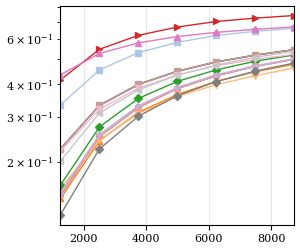}
    \caption{ELI5}
    \label{fig:ELI5}
  \end{subfigure}\hfill
  \begin{subfigure}{0.31\linewidth}
    \includegraphics[width=\linewidth]{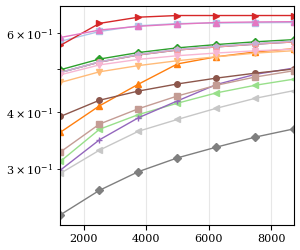}
    \caption{WildChat}
    \label{fig:WildChat}
  \end{subfigure}\hfill
  \begin{subfigure}{0.31\linewidth}
    \includegraphics[width=\linewidth]{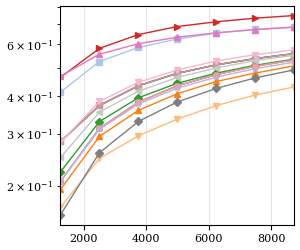}
    \caption{NaturalQuestions}
    \label{fig:NaturalQuestions}
  \end{subfigure}
  \vspace{0.5em}
    \begin{subfigure}{0.31\linewidth}
    \includegraphics[width=\linewidth]{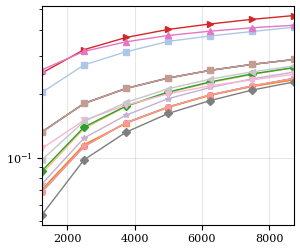}
    \caption{HotPotQA}
    \label{fig:HotPotQA}
  \end{subfigure}\hfill
  \begin{subfigure}{0.31\linewidth}
    \includegraphics[width=\linewidth]{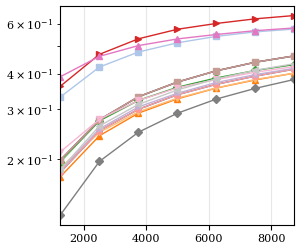}
    \caption{TriviaQA}
    \label{fig:TriviaQA}
  \end{subfigure}\hfill
  \begin{subfigure}{0.31\linewidth}
    \includegraphics[width=\linewidth]{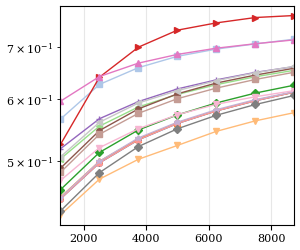}
    \caption{Quora}
    \label{fig:Quora}
  \end{subfigure}
  \vspace{0.5em}
  \begin{subfigure}{0.31\linewidth}
    \includegraphics[width=\linewidth]{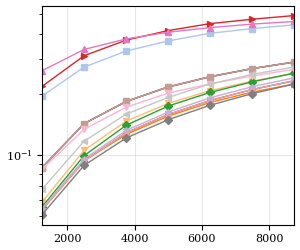}
    \caption{MsMarco}
    \label{fig:MsMarco}
  \end{subfigure}\hfill
  \begin{subfigure}{0.31\linewidth}
    \includegraphics[width=\linewidth]{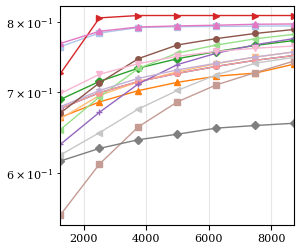}
    \caption{MMLU}
    \label{fig:MMLU}
  \end{subfigure}\hfill
  \begin{subfigure}{0.31\linewidth}
    \includegraphics[width=\linewidth]{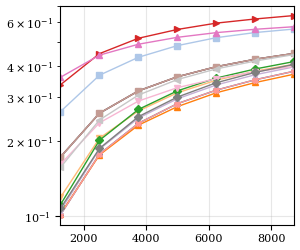}
    \caption{StackOverflow}
    \label{fig:StackOverflow}
  \end{subfigure}
\label{fig:hit-rate}
\caption{Hit rate as a function of cache size - note log scale on the Y-axis}
\end{figure*}

\begin{figure*}[ht!]
  \centering
\begin{subfigure}{0.9\linewidth}
    \includegraphics[width=\linewidth]{figures/legend.png}
    \label{fig:legend-1}
\end{subfigure}
  \vspace{0.5em}
  \begin{subfigure}{0.31\linewidth}
    \includegraphics[width=\linewidth]{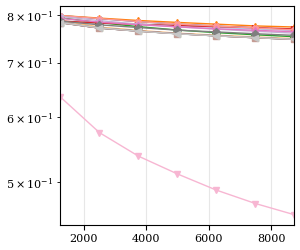}
    \caption{ELI5}
    \label{fig:MHD:ELI5}
  \end{subfigure}\hfill
  \begin{subfigure}{0.31\linewidth}
    \includegraphics[width=\linewidth]{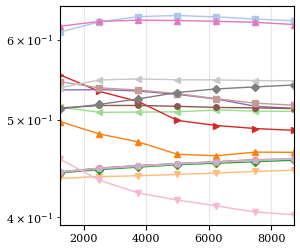}
    \caption{WildChat}
    \label{fig:MHD:WildChat}
  \end{subfigure}\hfill
  \begin{subfigure}{0.31\linewidth}
    \includegraphics[width=\linewidth]{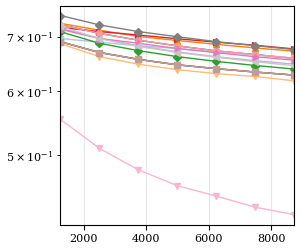}
    \caption{NaturalQuestions}
    \label{fig:fig:MHD:NaturalQuestions}
  \end{subfigure}
  \vspace{0.5em}
    \begin{subfigure}{0.31\linewidth}
    \includegraphics[width=\linewidth]{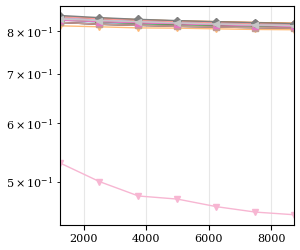}
    \caption{HotPotQA}
    \label{fig:MHD:HotPotQA}
  \end{subfigure}\hfill
  \begin{subfigure}{0.31\linewidth}
    \includegraphics[width=\linewidth]{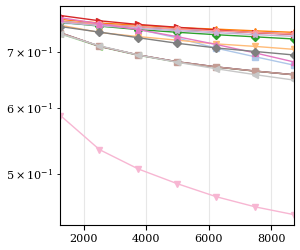}
    \caption{TriviaQA}
    \label{fig:MHD:TriviaQA}
  \end{subfigure}\hfill
  \begin{subfigure}{0.31\linewidth}
    \includegraphics[width=\linewidth]{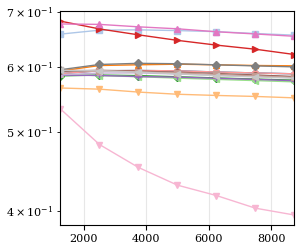}
    \caption{Quora}
    \label{fig:MHD:Quora}
  \end{subfigure}
  \vspace{0.5em}
  \begin{subfigure}{0.31\linewidth}
    \includegraphics[width=\linewidth]{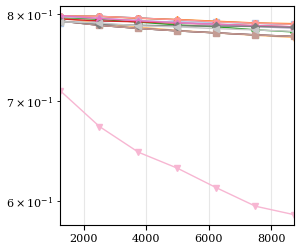}
    \caption{MsMarco}
    \label{fig:MHD:MsMarco}
  \end{subfigure}\hfill
  \begin{subfigure}{0.31\linewidth}
    \includegraphics[width=\linewidth]{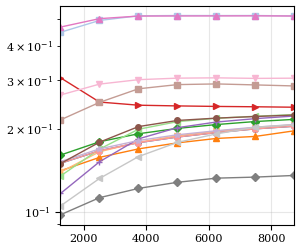}
    \caption{MMLU}
    \label{fig:MHD:MMLU}
  \end{subfigure}\hfill
  \begin{subfigure}{0.31\linewidth}
    \includegraphics[width=\linewidth]{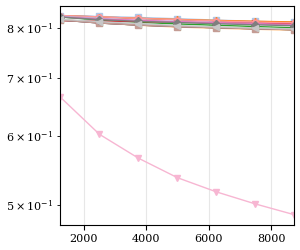}
    \caption{StackOverflow}
    \label{fig:MHD:StackOverflow}
  \end{subfigure}
\label{fig:mean-hit-distance}
\caption{Mean hit distance as a function of cache size - note log scale on the Y-axis}
\end{figure*}

\section{Online Cache Management Policies}

In this work, we have explored immediate adaptations to semantic caching of several known caching policies including LRU~\cite{LRU}, LFU~\cite{WLFU}, LFUDA~\cite{LFUAGING}, LRU-K~\cite{LRUK}, ARC~\cite{ARC}, and RAP~\cite{RAP}.
That is, whenever there is a hit, we update the metadata of the top item that is returned by the cache for the respective query according to the respective policy.

In addition, we experimented with several alternative ideas of how to better address the fact that in semantic caching, there may be multiple vectors that could potentially serve as a hit, i.e., are within the threshold distance from the queried vector.
Since in our experiment LFU based policies seems to generally perform better than other approaches, we focus in this paper on LFU variations.
To that end, we have designed SphereLFU, MissLFU, ClusterLFU, ClusterLRU, DistanceLFU, Surprisal, and SurprisalLFU.
For lack of space, we refer the reader to Appendix~\ref{sec:all-policies} for additional information about these policies.

Of these, SphereLFU is particularly notable for achieving the highest semantic accuracy. 
Unlike discrete frequency counters, SphereLFU employs a probabilistic credit assignment mechanism, where usage 'mass' is distributed among neighboring cached vectors based on a similarity kernel (\Cref{sec:SphereLFU}).
In that sense, it is trying act as an online approximation of~FGRVB.


We have explored ML based policies, based on LRB~\cite{LRB}, but found that results were much worse than other methods we explored. This is due to the fact that the workloads were not long enough to enable the complex learning mechanism of LRB to converge, especially with the more complex nature of non-exact matching.

\section{Experiments}
\label{sec:evaluation}

In this section, we evaluate the performance of our proposed semantic caching policies against standard baselines and offline oracles. We utilize the nine datasets described in \Cref{sec:datasets}, processing the first 100K entries of each (91K for NaturalQuestions). Based on the properties observed in our vector analysis and on Section~\ref{sec:embeddings similarity}, we set the semantic threshold distance $D_{\text{thresh}} = 0.9$ for all reported experiments. We selected the threshold such that the hit rate is within a useful range for all datasets, at least 10\%, although the trends are similar for other thresholds. In Appendix~\ref{sec:additional-experiments}, we present results for the smaller normalized L2 distance thresholds of 0.7 and 0.5. In addition, the complete unprocessed data is available online~\cite{GitHub}.

\subsection{Datasets}
\label{sec:datasets}
The nine real-world datasets we employed in our evaluation include MsMarco~\cite{MSMARCO}, WildChat~\cite{WildChat}, ELI~\cite{ELI5}, Natural Questions (NQ)~\cite{NaturalQuestions}, StackOverflow queries (SO)~\cite{StackOverflowBigQuery}, Quora~\cite{QuoraQuestionPairs}, MMLU~\cite{MMLU}, TriviaQA~\cite{TriviaQA}, and HotPot QA~\cite{HotpotQA}.
They span chats, conversational reformulations, community Q\&A, web search, open-domain QA, and broad consumer questions.

We have obtained the datasets from Hugging Face and processed them into a unified format, utilizing relevant data from each for specific purposes detailed below. 
Our canonical datasets can be generated with a script that is included in our open-source project online.
We only embed queries from the datasets, and not~responses.

An analysis of these datasets' characteristics appears in Appendix~\ref{sec:all-datasets}.
Our evaluation has revealed that they have the following properties: they rarely carry ordering signal, mostly affected by frequency and follow a specific distribution. It is of interest that when plotting the point density for the embeddings and looking at the number of neighbors of distance $< D$, regardless of the selected $D$, the frequency vs rank graph follows a Zipfian distribution. 

\subsection{Implementation and Hardware}

We have implemented our code in Python, utilizing well known, open source modules such as NumPy, SciPy, etc for efficiency. For this paper, we utilized a Faiss flat index for our vector store, although our code also supports other vector stores such as Milvus and HNSWlib.
Our code is available in open-source~\cite{GitHub}.

\subsection{Hit Rate Results}

Most datasets exhibit a strong frequency bias, which can be inferred from the 3 bands formed: for offline policies, frequency based policies and the other temporal based policies. Consequently, LFU-based policies generally outperform recency-based methods such as FIFO and LRU. Among online policies, SphereLFU consistently achieves high hit rates, rivaling other frequency-based baselines. On ``hard'' datasets with long-tail distributions, such as StackOverflow and HotPotQA, SphereLFU remains the top online contender, though a noticeable gap remains between it and the theoretical upper bound obtained by the offline heuristics of~VOPT.

This gap confirms that in low-signal regimes, clairvoyant knowledge provides a decisive advantage over online estimation. However, in non-stationary workloads like WildChat, high temporal locality allows LRU or time-decayed density (SphereLFU) to compete with the static volume maximization of FGRVB. The optimal online variant is highly workload-dependent: WildChat benefits from DALFU's aging, ELI5 from RAP's probabilistic admission, and MMLU from LRU-K’s filtering of "one-hit wonders." For the Quora dataset, which exhibits an order of magnitude lower hit rate, SurprisalLFU's use of static linguistic features to identify cache-worthy prompts more effectively than frequency or recency alone.

FGRVB and RGRVB represent distinct optimization philosophies: maximizing total future coverage versus maximizing temporal precision. FGRVB treats caching as a Maximum Coverage Problem, effectively retaining "heavy hitter" vectors that cover high future volumes regardless of temporal distance. Conversely, RGRVB optimizes for the immediate next hit; while this excels in bursty workloads and avoids cache pollution, its "myopic" view causes it to plateau below FGRVB on static distributions. 

Finally, CRVB attempts to reduce semantic caching to an exact-match problem via pre-clustering. While theoretically sound, it struggles with the "overlapping cluster" phenomenon inherent to high-dimensional spaces. As $D_{\text{thresh}}$ increases, queries often fall within the radii of multiple potential clusters, breaking the equivalence class assumption. Flexible greedy approaches (FGRVB and RGRVB) thus outperform CRVB in high-overlap regimes. In summary, FGRVB serves as the strongest approximation for the hit rate upper bound, while SphereLFU offers the best practical online approximation. At lower thresholds, SurprisalLFU benefits from breaking frequency ties using surprisal.

\subsection{Semantic Accuracy}
\label{sec:semantic_accuracy}

In semantic caching, a hit is not a binary success; the proximity of the retrieved vector to the query also indicates qualitative superiority. We quantify this using mean hit distance (MHD), the average distance between queries and their retrieved cached vectors.

Lower MHD values indicate higher semantic fidelity, as the cache consistently provides results closer to the query centroid. As shown in \Cref{fig:mean-hit-distance}, SphereLFU demonstrates a dominant advantage, achieving the lowest MHD across seven of the nine datasets. The exceptions, MMLU and WildChat, are characterized by a higher proportion of "one-hit wonders." WildChat is heavily recency-biased due to iterative prompt reformulations, while MMLU’s extreme topical diversity creates a sparse environment where density estimation is less effective.

SphereLFU’s superior performance stems from its soft frequency updates. By treating the cache as an online kernel density estimator, it identifies and retains "prototypes" that serve as the medoids of high-density semantic regions. Notably, SphereLFU outperforms the three VOPT heuristics (CRVB, FGRVB, and RGRVB) in semantic accuracy. This results from their differing objective functions: while VOPT variants are strictly formulated to maximize the binary hit count, often by placing vectors at the fringes of clusters to maximize covered volume, SphereLFU naturally gravitates toward the center of request clusters.

These findings suggest that SphereLFU not only maximizes hit rates but also preserves context integrity. This is critical for LLM-based systems, RAG pipelines, and KV caches, where the quality of the retrieved prompt or context directly impacts model performance. As the distance threshold increases, the advantage of SphereLFU becomes even more pronounced due to the higher density of hits per request.

\section{Discussion}

In this work, we have systematically studied replacement policies for semantic caching of LLMs and natural language based query-answering systems.
Our study has revealed several gaps in transferring knowledge about caching from exact match caching to semantic caching.
In particular, we identified that most of the tested workloads were frequency biased, so LFU variants performed best.
We have also addressed the question of how to handle the bookkeeping of cache management policies when there is more than one possible hit in the semantic cache. We implemented several novel policies to evaluate different approaches.

We have also shown that computing VOPT, the adaptation of OPT for semantic caching, is NP-hard.
Consequently, we introduced three heuristic alternatives to VOPT, named CRVB, FGRVB and RGRVB, and studied their performance.
Our study has found that for most workloads, the best clairvoyant algorithm significantly outperform the best online policies, meaning that there is still room for additional research.

\newpage

\paragraph*{Impact Statements:}
This paper presents work whose goal is to advance the field of machine learning such as improving the user's experience due to shorter average latencies and lower usage fees.
From the point of view of LLM providers, it should reduce resource consumption such as computations, memory, and communication w.r.t to a target sustainable throughput.
There are many potential societal consequences of our work, none of which we feel must be specifically highlighted here.

\bibliography{citations}
\bibliographystyle{icml2026}

\newpage
\appendix
\onecolumn

\section{Datasets}
\label{sec:all-datasets}

As mentioned above, we have selected diverse datasets for our evaluation.
These datasets are discussed below.
Table~\ref{tab:embedding_stats_transposed} exhibits key metrics that characterizes these datasets.
Additionally, Figure~\ref{fig:point_density_3x3} shows their point density distribution under various similarity threshold distances.

\paragraph{ELI5}
A collection of question–answer pairs drawn from the "Explain Like I’m Five" subreddit~\cite{ELI5}. The questions are short, usually no longer than a sentence, and expressed in casual, colloquial language. They often reflect trending topics or popular opinions, making the dataset a good source of conversational, non-expert phrasing.

\paragraph{WildChat}
Dialogues constructed from real ChatGPT logs, preserved in chronological order and grouped into sessions~\cite{WildChat}. Users frequently resubmit identical or near-identical queries, either within the same session or across simultaneous sessions, often with minor rephrasing. This behavior induces a recency bias, as queries are shaped by ongoing conversations and iterative reformulations.

\paragraph{Natural Questions}
An open-domain QA dataset designed around questions answerable using Wikipedia~\cite{NaturalQuestions}. These questions are general, objective, and less influenced by conversational context, offering a complementary contrast to more session-oriented datasets.

\paragraph{MS~MARCO}
A large-scale dataset of real Bing search queries paired with relevant passages, released by Microsoft~\cite{MSMARCO}. The queries mirror web-search style phrasing, ranging from keyword-style to natural-language questions, and provide coverage of diverse real-world information needs.

\paragraph{StackOverflow}
Programming-related questions obtained from the Google BigQuery public dataset of StackOverflow posts~\cite{StackOverflowBigQuery}. We restrict ourselves to question titles, which are typically concise, technical, and domain-specific, making them useful for evaluating embeddings on structured knowledge tasks.

\paragraph{Quora Question Pairs}
A dataset of user-submitted questions on Quora, annotated for semantic equivalence~\cite{QuoraQuestionPairs}. This property makes the dataset particularly suitable for studying semantic overlap and redundancy in embedding spaces.

\paragraph{MMLU}
A massive benchmark covering 57 diverse subjects, ranging from elementary mathematics to professional law and the humanities~\cite{MMLU}. It is designed to assess a model's breadth of world knowledge and problem-solving capabilities, providing a rigorous test for general knowledge retrieval outside of specific domain constraints.

\paragraph{HotpotQA}
A dataset focused on multi-hop reasoning, where questions cannot be answered by a single document alone but require combining information from multiple supporting sources~\cite{HotpotQA}. This structure challenges retrieval systems to handle complex dependencies and synthesize disconnected facts rather than performing simple keyword lookups.

\paragraph{TriviaQA} 
A large-scale reading comprehension dataset containing questions authored by trivia enthusiasts, paired with evidence from web search results~\cite{TriviaQA}. The questions are linguistically rich, complex, and often compositional, requiring the system to parse noisy, lengthy evidence documents to locate precise answers.

\begin{table}[t]
\centering
\begin{tabular}{lccccccccc}
\hline
  & MsMarco & WildChat & ELI5 & NQ & SO & Quora & MMLU & TriviaQA & HotPotQA \\
\hline
Cos Sim Avg & 0.0182 & 0.0484 & 0.0523 & 0.0206 & 0.0170 & 0.0451 & 0.0562 & 0.0570 & 0.0848 \\
Cos Sim Std & 0.0757 & 0.0861 & 0.0845 & 0.0871 & 0.0893 & 0.0843 & 0.0995 & 0.0919 & 0.1011 \\
L2 Mean & 1.4019 & 1.3781 & 1.3754 & 1.3978 & 1.4019 & 1.3806 & 1.3718 & 1.3715 & 1.3508 \\
L2 Std & 0.0555 & 0.0655 & 0.0633 & 0.0644 & 0.0658 & 0.0635 & 0.0759 & 0.0687 & 0.0766 \\
PCA Ent. & 7.8781 & 7.8251 & 7.7055 & 7.6928 & 7.6043 & 7.7551 & 7.5655 & 7.5658 & 7.4615 \\
Cluster AVG & 1.0203 & 1.3606 & 1.0434 & 1.2347 & 1.0077 & 1.2958 & 1.9112 & 1.0716 & 1.0081 \\
Cluster STD & 0.1491 & 2.5079 & 0.2383 & 0.6622 & 0.0896 & 0.8880 & 1.3860 & 0.2975 & 0.0925 \\
Hopkins & 0.7401 & 0.8044 & 0.7558 & 0.7897 & 0.7424 & 0.8054 & 0.8839 & 0.7551 & 0.7252 \\
\hline
\end{tabular}
\caption{Per dataset metrics for first 100K embeddings in each dataset.}
\label{tab:embedding_stats_transposed}
\end{table}

\begin{figure*}[t!]
  \centering
  \begin{subfigure}{0.31\linewidth}
    \includegraphics[width=\linewidth]{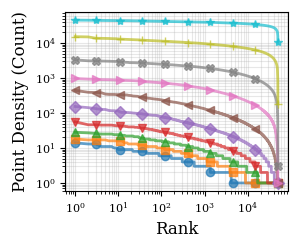}
    \caption{ELI5}
    \label{fig:eli5_density}
  \end{subfigure}\hfill
  \begin{subfigure}{0.31\linewidth}
    \includegraphics[width=\linewidth]{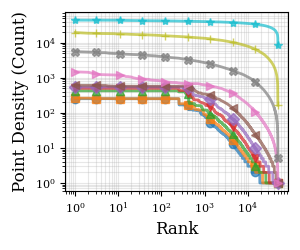}
    \caption{WildChat}
    \label{fig:wildchat_density}
  \end{subfigure}\hfill
  \begin{subfigure}{0.31\linewidth}
    \includegraphics[width=\linewidth]{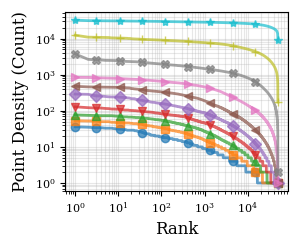}
    \caption{NaturalQuestions}
    \label{fig:nq_density}
  \end{subfigure}

  \vspace{0.5em}

  \begin{subfigure}{0.31\linewidth}
    \includegraphics[width=\linewidth]{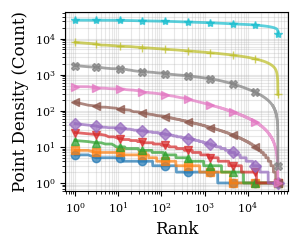}
    \caption{MS MARCO}
    \label{fig:msmarco_density}
  \end{subfigure}\hfill
  \begin{subfigure}{0.31\linewidth}
    \includegraphics[width=\linewidth]{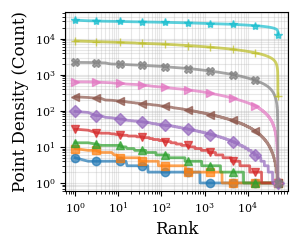}
    \caption{StackOverflow}
    \label{fig:stackoverflow_density}
  \end{subfigure}\hfill
  \begin{subfigure}{0.31\linewidth}
    \includegraphics[width=\linewidth]{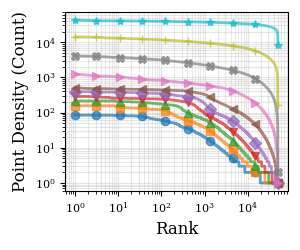}
    \caption{Quora}
    \label{fig:quora_density}
  \end{subfigure}

  \vspace{0.5em}

  \begin{subfigure}{0.31\linewidth}
    \includegraphics[width=\linewidth]{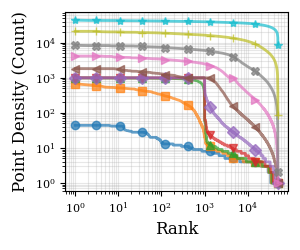}
    \caption{MMLU}
    \label{fig:mmlu_density}
  \end{subfigure}\hfill
  \begin{subfigure}{0.31\linewidth}
    \includegraphics[width=\linewidth]{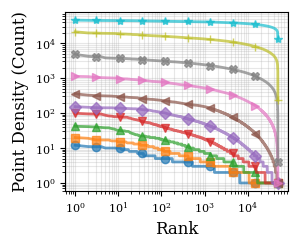}
    \caption{TriviaQA}
    \label{fig:triviaqa_density}
  \end{subfigure}\hfill
  \begin{subfigure}{0.31\linewidth}
    \includegraphics[width=\linewidth]{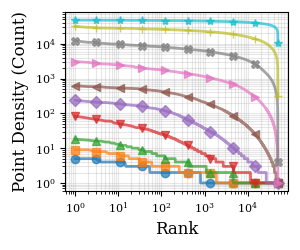}
    \caption{HotPotQA}
    \label{fig:hotpotqa_density}
  \end{subfigure}

  \caption{Point Density Distribution for Various Semantic Distance Threshold Values}
  \label{fig:point_density_3x3}
%
  \centering
    \includegraphics[width=\linewidth]{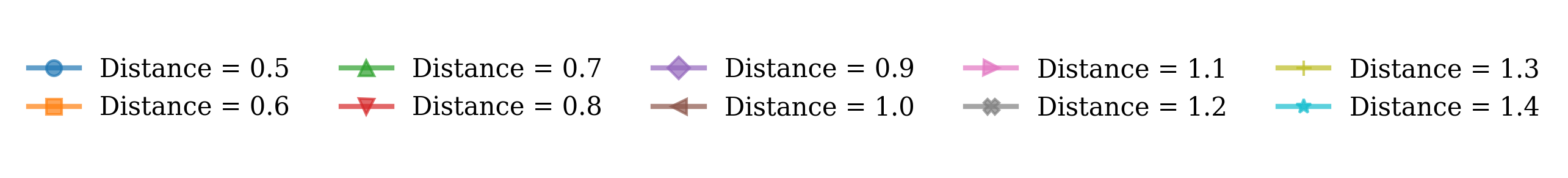}
    \label{fig:density-legend}
\end{figure*}

\subsection{Datasets Metrics}
\label{subsec:datasets-metrics}

Table~\ref{tab:embedding_stats_transposed} presents key statistical properties of the embeddings (using the first 100K queries of each dataset) to characterize the geometric structure and clustering tendency of the semantic space.
We consider the following metrics:

\begin{description}
    \item[Cos Sim (Avg/Std):] The average and standard deviation of cosine similarity between all pairs of embeddings. The low average values (near 0) indicate that the embedding space is sparsely populated, and random queries are typically orthogonal to one another.
    
    \item[L2 (Mean/Std):] The average Euclidean distance between all embeddings pairs. Since our embeddings are normalized to the unit sphere, orthogonal vectors have a distance of $\sqrt{2} \approx 1.414$. The observed means ($\approx 1.37 - 1.40$) align with the cosine similarity results, confirming that unrelated queries are far apart in the vector space.
    
    \item[PCA Ent. (Entropy):] A measure of the effective dimensionality of the data, calculated as the entropy of the explained variance ratio of the PCA components. A higher value indicates that information is distributed across many dimensions, whereas a lower value implies the data lies on a lower-dimensional manifold.
    
    \item[Cluster (AVG/STD):] We grouped vectors into semantic clusters using the threshold $D_{\text{thresh}} = 1$. \textbf{Cluster AVG} denotes the average size of a cluster; values near 1.0 (e.g., StackOverflow) indicate high uniqueness, while higher values (e.g., MMLU) indicate high redundancy. \textbf{Cluster STD} measures the variance in cluster size; a high standard deviation (e.g., WildChat) indicates a long tail distribution where specific topics are extremely popular (highly repeated) while most others are unique.
    
    \item[Hopkins Statistic:] A statistical test for spatial randomness. A value of 0.5 indicates a uniform (random) distribution, while values approaching 1.0 indicate a strong tendency to cluster. Our scores (ranging from 0.74 to 0.88) confirm that the datasets possess significant non-random semantic structure, justifying the use of semantic caching.
\end{description}

\subsection{Datasets Characteristic}
\label{subsec:datasets-Characteristic}

The empirical distributions for MMLU and WildChat, as shown in Figure~\ref{fig:point_density_3x3}, diverge significantly from the other workloads, exhibiting a distinct "step" function across varying distance thresholds. Specifically, these datasets maintain a relatively flat density for small $D_{\text{thresh}}$ values followed by a sharp slope. When cross-referenced with their exceptionally high cluster standard deviations, which are an order of magnitude larger than those of other datasets, this geometric profile suggests a high concentration of transient queries or "one-hit wonders."

While these workloads contain dense clusters of frequent queries, they are simultaneously characterized by a massive, sparse "long tail" of unique embeddings that do not overlap. For cache management, this indicates that while the "heavy hitters" are easily captured by frequency-based policies, a substantial portion of the cache is at risk of pollution by these non-reusable entries unless an effective admission filter is employed.

\section{Policies}
\label{sec:all-policies}

\subsection{Adaptions of Classic (Exact Match) Caching}

As mentioned before, numerous cache management policies have been developed over the years, e.g.,~\cite{LRU,WLFU,LFUAGING,2Q,LRUK,FRD,TinyLFU,LRFU,LIRS,DLIRS,ARC,AdaptiveTinyLFU,LeCaR,LRB,SF24,LHD}.
Here, we briefly elaborate on the ones we have experimented with in this work, which were chosen based on their popularity and represent a class of similar~policies.

\paragraph*{Least Recently Used (LRU)}
The LRU policy evicts the cache entry that has not been accessed for the longest period~\cite{LRU}. Based on the principle of time locality, it assumes that recently accessed items are more likely to be used again soon. LRU is widely used due to its effectiveness in scenarios where temporal locality is prevalent, and since it is not susceptible to Belady's anomaly~\cite{BNS69}. However, its implementation may require additional overhead to track access times or maintain an ordered list of entries.

\paragraph*{Least Frequently Used (LFU)}
The LFU policy~\cite{WLFU} removes the entry that has been accessed the least number of times. This method assumes that frequently accessed items are more likely to remain relevant. While LFU can be effective in specific scenarios, it may fail to adapt quickly to changing access patterns, especially if stale items remain in the cache due to infrequent access.

\paragraph*{LFUDA}
LFUDA~\cite{LFUAGING} enhances LFU by introducing a dynamic aging mechanism to prevent cache pollution by stale but historically popular items. It maintains a global aging factor that increases over time, ensuring that older items with high access counts are eventually evicted if not accessed recently. This approach strikes a balance between recency and frequency, adapting better to changing workloads compared to standard LFU. LFUDA is particularly useful in long running systems where workload patterns evolve over time.

\paragraph*{LRU-K}
LRU-K~\cite{LRUK} is an extension of the LRU policy that considers the \emph{K-th} most recent access when making eviction decisions. By doing so, it captures long term access patterns rather than reacting solely to the most recent usage. When \( K = 1 \), LRU-K reduces to standard LRU. As \( K \) increases, the policy becomes more resistant to short-term fluctuations, favoring items that exhibit consistent reuse over time. However, LRU-K requires additional space to maintain the history of accesses, and its performance depends on the appropriate choice of \( K \).

\paragraph*{Adaptive Replacement Cache (ARC)}
ARC~\cite{ARC} dynamically balances between recency and frequency by maintaining two LRU lists: one for recently accessed entries and one for frequently accessed entries. It uses a self tuning mechanism to adapt to workload characteristics by adjusting the size of each list based on observed hit rates. This adaptability allows ARC to outperform both LRU and LFU in many scenarios without requiring manual parameter tuning. ARC is particularly effective in systems where access patterns shift between recency and frequency dominated behaviors. LeCaR~\cite{LeCaR}, a well known ML based cache policy, demonstrated hit rates comparable to ARC in its original paper.

\paragraph*{Random Admission Policy (RAP)}
RAP~\cite{RAP} was originally proposed for top-$k$ and frequency estimation, maintaining counters for the most frequent items in a stream when the stream size far exceeds available memory. 
We adapt it for cache management. Similar to LFU, RAP identifies the least frequently used item for potential replacement. However, instead of always evicting it, the item with counter value $C$ is replaced with probability $\frac{1}{C+1}$. This probabilistic admission reduces unnecessary evictions caused by infrequent (tail) items, preserving space for items more likely to be~reused.

\subsection{Novel Cache Eviction Policies}
\label{sec:novel-policies}

As we mentioned in the Introduction, caching embedding vectors for LLM workloads presents unique challenges in applying classical cache management policies. Specifically, embedding vectors encode semantic similarity in a continuous embedding space, which allows for “close enough” hits.
However, this complicates eviction decisions. For example, two distinct queries may map to nearly identical embeddings, making it redundant to store both. Similarly, groups of related embeddings may exhibit correlated access patterns, suggesting that decisions should sometimes be made at the cluster level rather than the individual vector level. To address these challenges, we introduce several novel eviction policies that exploit semantic structure, vector similarity, and linguistic properties of the underlying data, in order to evaluate how to best exploit the properties of embedding vectors.

\subsubsection{SphereLFU}
\label{sec:SphereLFU}

SphereLFU is an online policy designed to approximate the behavior of our best offline policy, FGRVB. Recall that FGRVB maximizes the total ``volume'' of future queries covered by the cache. 
Since SphereLFU lacks future knowledge, it approximates the total volume by the integral of the probability density function (PDF) of the request stream over the covered regions.
Thus, SphereLFU reframes cache management as an online Kernel Density Estimation (KDE) task: by maintaining vectors that inhabit the highest-density regions of the embedding space, we maximize the expected number of future hits.

\paragraph{Rational}
Standard frequency-based policies (like LFU) operate on a ``winner-takes-all'' basis: only the specific vector returned as a hit receives a frequency update. In a continuous semantic space, this is suboptimal. A query falling between two cached vectors $v_1$ and $v_2$ provides evidence that \textit{both} are valuable, yet standard policies discard this information for the non-selected vector.

SphereLFU applies \textit{soft frequency updates}. We view each query not as a discrete point, but as a unit of probability mass diffusing into the local neighborhood. When a query arrives, we distribute this mass among all nearby cached vectors proportional to their proximity. This allows the cache to build a smooth density map of the query distribution, retaining groups of vectors that collectively cover high-traffic semantic regions, even if no single vector is an exact match for every~query.

\paragraph{Probabilistic Model.}
We assume each request embedding $q$ is generated by a latent semantic source $Z$, corresponding to one of the cached embeddings. Let $\{e_i\}$ denote the cached embeddings, all $\ell_2$-normalized. We model the conditional likelihood of observing $q$ given source $i$ using a spherical similarity kernel equivalent to the von Mises-Fisher distribution on the unit sphere:
\[
p(q \mid Z=i) \;\propto\; \exp\!\left(-\frac{\kappa}{2}\,d(q, e_i)^2\right),
\]
where $\kappa>0$ is a concentration parameter determining the sharpness of the kernel.

\paragraph{Soft Frequency Updates.}
Upon receiving a query $q$, we identify the set $\mathcal{M}(q)$ of cached embeddings within the threshold distance $D_{\text{thresh}}$. We calculate a \emph{responsibility} score $r_i(q)$ for each neighbor $i \in \mathcal{M}(q)$, representing the probability that vector $i$ generated query $q$:
\[
r_i(q)
=
\frac{(c_i + \alpha)\exp\!\left(-\frac{\kappa}{2} d_i^2\right)}
{\sum_{j \in \mathcal{M}(q)} (c_j + \alpha)\exp\!\left(-\frac{\kappa}{2} d_j^2\right)}
\]
where $d_i=d(q,e_i)$, $c_i$ is the current accumulated frequency of embedding $i$, and $\alpha>0$ is a smoothing constant. The counters are then updated by distributing exactly one unit of mass:
\[
c_i \leftarrow c_i + r_i(q).
\]
Unlike heuristic LFU counters, $c_i$ here represents a sufficient statistic for the mixture weight $\pi_i = \Pr[Z=i]$, converging to the true usage density under stationary conditions.

\paragraph{Distribution Shift and Decay.}
To handle non-stationary workloads where topic popularity shifts over time, we apply a multiplicative decay factor $\gamma \in (0, 1)$ during updates:
\[
c_i \leftarrow \gamma c_i + r_i(q).
\]
This effectively creates an exponentially weighted moving average of the density. For computational efficiency, rather than decaying every counter on every step, we implement this lazily or by periodically rescaling all global counters by a fixed factor (e.g., halving all counters every $N$ requests).

\paragraph{Efficiency.}

Unlike our other LFU variants that perform a $k$-NN search to update a single metadata entry, SphereLFU employs a range query to identify all vectors within the semantic threshold. 
While this theoretically allows for an update of the entire cache, the high-dimensional embedding space is characterized by extreme sparsity.
In practice, the number of neighbors within $D_{\text{thresh}}$ is consistently low, fewer than 10. 
If the neighbor count were to increase significantly, it would indicate that $D_{\text{thresh}}$ is likely set too high. 
In scenarios where strict latency bounds are required, computational complexity can be capped by restricting soft-frequency updates to the top $k$ nearest neighbors. 
This approximation preserves the most significant portion of the probability mass while maintaining a guaranteed $O(k)$ update time.

\subsubsection{MissLFU}
Similar to LFU, but only inserts an embedding vector into the cache if there are no stored semantically identical vectors in the cache.
The intuition here is that if a vector is close enough to an existing vector in the cache, there is already a (semantic) hit, so there is no point inserting the new vector.
Hence, we prefer to keep embedding vectors that may otherwise not have a cached semantically close representative.

\subsubsection{ClusterLFU}  
This policy groups vectors into clusters using a greedy online clustering algorithm (although other clustering algorithms can be used as well).
It then applies LFU at the cluster level rather than for individual vectors. 
Each cluster maintains a single LFU counter representing the aggregate access frequency of its members. 
When an eviction is required, the least frequently used cluster is selected, and a random vector from that cluster is removed.  
By aggregating counters at the cluster level, this approach reduces metadata overhead and exploits semantic similarity among vectors, allowing the cache to favor groups of related embeddings that are accessed~frequently.  

\subsubsection{ClusterLRU}  
Similar to ClusterLFU, this policy operates at the cluster level using the same greedy online clustering method. Instead of tracking frequency, it maintains the most recent access time for each cluster, based on the latest access to any of its members. Upon eviction, the least recently used cluster is selected, and a random vector from that cluster is removed.  
This aggregation reduces the need to maintain per-vector recency data, making it more efficient for large vector caches while still capturing temporal locality patterns.  

\subsubsection{DistanceLFU}  
This policy adjusts LFU counting based on vector similarity to incoming queries. If the closest stored embedding to a query is at distance \( d \) (with \( D_{\text{thresh}} \) being the maximum distance threshold to be considered a hit), the corresponding counter is incremented by:
\[
\Delta = 1 - \frac{d}{D_{\text{thresh}}}~~.
\]  
This weighting increases the counter proportionally to how close is the match, allowing the cache to prioritize embeddings that better match recent queries.  

\subsubsection{Surprisal}  
This policy evicts the sentence with the maximum surprisal. Surprisal is calculated using the word frequencies library in python, which is based on multiple data sources and includes the top 100 million most popular words. A floor value is given to missing words. The Surprisal calculation is done on the raw text string, and not on the embedding iteself.

In this paper, we've used Surprisal for efficiency reasons - as it is fast to calculate in a "bag of words" method. Yet, it is natural to utilize the perplexity of the embedding, or other metrics that can be derived from an LLM processing the query for more accurate metric.

\subsubsection{SurprisalLFU}  
This policy leverages linguistic surprisal to guide eviction decisions. Surprisal measures how unexpected a sentence is, with higher values indicating lower probability of occurrence. Perplexity is a metric closely related to surprisal, and is defined as the inverse probability of the sequence normalized by its length.  

Direct computation of perplexity from an LLM is computationally expensive for online caching. Instead, a more efficient metric can be obtained using a precomputed dictionary of token probabilities or frequency statistics from a large corpus. The surprisal of a sentence can be estimated as:  
\[
\text{Surprisal}(s) = -\sum_{w \in s} \log p(w)
\]  
where \(p(w)\) is the probability of token \(w\) under the reference distribution.  
Upon eviction, the item with the highest surprisal value out of all items with the minimal LFU counter is removed from the cache. This approach prioritizes retaining semantically common vectors that are more likely to contribute to future hits, while discarding outliers that are unlikely to be requested again. This is particularly useful, as we found that caches often accumulate a long tail of items with nearly identical counters.

\section{Vector Construction}
\label{sec:vector_construction}

To reduce the Maximum Coverage Problem (MCP) to the VOPT problem, we construct a geometric embedding that translates set membership into Euclidean distance. Let the MCP instance consist of a universe of elements $U = \{e_1, \dots, e_n\}$ and a collection of sets $\mathcal{S} = \{S_1, \dots, S_m\}$. Let $K_{\max} = \max_{j} |S_j|$ be the size of the largest set. When $K_{\max} = 1$, the problem is trivial. Without loss of generality, we assume $K_{\max} > 1$.

We define a high-dimensional Euclidean space $\mathbb{R}^{n+m}$ with orthonormal basis vectors:
\begin{itemize}
    \item $\{\mathbf{u}_1, \dots, \mathbf{u}_n\}$ corresponding to the $n$ elements.
    \item $\{\mathbf{w}_1, \dots, \mathbf{w}_m\}$ corresponding to the $m$ sets.
\end{itemize}

We construct the vectors using scaling parameters $\alpha, \beta, \gamma > 0$.
For each element $e_i \in U$, we define the \emph{element vector}:
\begin{equation}
    v_{e_i} = \alpha \cdot \mathbf{u}_i
\end{equation}
For each set $S_j \in \mathcal{S}$, we define the \emph{set vector} as a weighted combination of its members and a unique identifier:
\begin{equation}
    v_{S_j} = \sum_{e_k \in S_j} (\beta \cdot \mathbf{u}_k) + (\gamma \cdot \mathbf{w}_j)
\end{equation}

For the reduction to hold, the embedding must satisfy four conditions relative to the cache threshold $D_{\text{thresh}}$:
\begin{enumerate}
    \item \textbf{Element Separation:} Distinct elements must not cover each other.
    \item \textbf{Set Separation:} Distinct sets must not cover each other.
    \item \textbf{Membership Coverage:} A set must cover its member elements.
    \item \textbf{Non-Membership Separation:} A set must not cover non-member elements.
\end{enumerate}

These conditions correspond to the following system of inequalities regarding the squared Euclidean distance:
\begin{align}
    \|v_{e_i} - v_{e_\ell}\|^2 = 2\alpha^2 &> D_{\text{thresh}}^2 \quad \forall i \ne \ell \label{eq:sep_elem} \\
    \|v_{S_j} - v_{S_\ell}\|^2 \ge 2\gamma^2 &> D_{\text{thresh}}^2 \quad \forall j \ne \ell \label{eq:sep_set} \\
    \|v_{S_j} - v_{e_i}\|^2 \le D_{\text{thresh}}^2 &\iff e_i \in S_j \label{eq:membership}
\end{align}

Expanding the distance for a set $S_j$ and an element $e_i$, we distinguish two cases:
\begin{itemize}
    \item \textbf{Case 1 ($e_i \in S_j$):} The distance is minimized due to the shared component $\beta$:
    \begin{equation}
        (\alpha - \beta)^2 + (|S_j| - 1)\beta^2 + \gamma^2 \le D_{\text{thresh}}^2
    \end{equation}
    \item \textbf{Case 2 ($e_i \notin S_j$):} The components are orthogonal, maximizing distance:
    \begin{equation}
        \alpha^2 + |S_j|\beta^2 + \gamma^2 > D_{\text{thresh}}^2
    \end{equation}
\end{itemize}

We now prove that valid parameters $\alpha, \beta, \gamma$ always exist. We set $\beta$ such that the set vector "leans" towards its members relative to the element scale:
\begin{equation}
    \beta = \frac{\alpha}{K_{\max}}
\end{equation}
We analyze the constraints for the "worst-case" scenario (a set of maximum size $K_{\max}$). Substituting $\beta$, the squared distance for a member element becomes:
\begin{equation}
    \text{Dist}^2_{\text{in}} = \alpha^2 \left(1 - \frac{1}{K_{\max}}\right) + \gamma^2
\end{equation}
To satisfy all conditions, we must find $\alpha$ and $\gamma$ such that:
\begin{align}
    2\alpha^2 &> D_{\text{thresh}}^2 \label{eq:const1} \\
    2\gamma^2 &> D_{\text{thresh}}^2 \label{eq:const2} \\
    \alpha^2 \left(1 - \frac{1}{K_{\max}}\right) + \gamma^2 &\le D_{\text{thresh}}^2 \label{eq:const3}
\end{align}
From (\ref{eq:const3}), we derive an upper bound for $\gamma^2$:
\begin{equation}
    \gamma^2 \le D_{\text{thresh}}^2 - \alpha^2 \left(\frac{K_{\max}-1}{K_{\max}}\right)
\end{equation}
To simultaneously satisfy (\ref{eq:const2}), this upper bound must strictly exceed $D_{\text{thresh}}^2/2$:
\begin{equation}
    D_{\text{thresh}}^2 - \alpha^2 \left(\frac{K_{\max}-1}{K_{\max}}\right) > \frac{D_{\text{thresh}}^2}{2}
\end{equation}
Rearranging for $\alpha^2$, we obtain the existence condition:
\begin{equation}
    \alpha^2 < \frac{D_{\text{thresh}}^2}{2} \cdot \left( \frac{K_{\max}}{K_{\max}-1} \right)
\end{equation}
Since $K_{\max} \ge 2$ implies $\frac{K_{\max}}{K_{\max}-1} > 1$, there exists a non-empty open interval for $\alpha^2$:
\begin{equation}
    \frac{D_{\text{thresh}}^2}{2} < \alpha^2 < \frac{D_{\text{thresh}}^2}{2} \left( \frac{K_{\max}}{K_{\max}-1} \right)
\end{equation}
By choosing any $\alpha^2$ within this interval and setting $\gamma^2$ sufficiently close to (but greater than) $D_{\text{thresh}}^2/2$, all strict inequalities are satisfied. This guarantees that a valid embedding exists for any MCP instance. \qed

\begin{figure*}[ht!]
  \centering
\begin{subfigure}{0.9\linewidth}
    \includegraphics[width=\linewidth]{figures/legend.png}
    \label{fig:legend-2}
\end{subfigure}
  \vspace{0.5em}
  \begin{subfigure}{0.31\linewidth}
    \includegraphics[width=\linewidth]{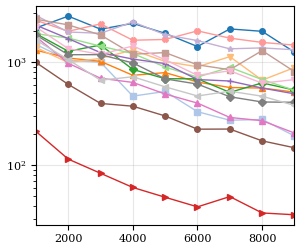}
    \caption{ELI5}
    \label{fig:Throughput:ELI5}
  \end{subfigure}\hfill
  \begin{subfigure}{0.31\linewidth}
    \includegraphics[width=\linewidth]{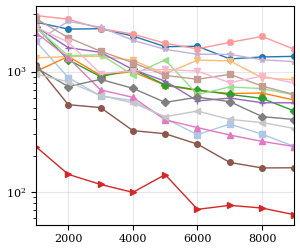}
    \caption{WildChat}
    \label{fig:Throughput:WildChat}
  \end{subfigure}\hfill
  \begin{subfigure}{0.31\linewidth}
    \includegraphics[width=\linewidth]{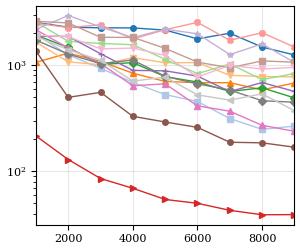}
    \caption{NaturalQuestions}
    \label{fig:fig:Throughput:NaturalQuestions}
  \end{subfigure}
  \vspace{0.5em}
    \begin{subfigure}{0.31\linewidth}
    \includegraphics[width=\linewidth]{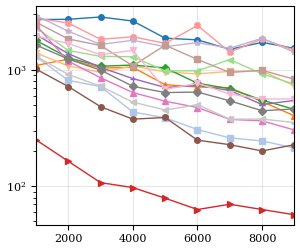}
    \caption{HotPotQA}
    \label{fig:Throughput:HotPotQA}
  \end{subfigure}\hfill
  \begin{subfigure}{0.31\linewidth}
    \includegraphics[width=\linewidth]{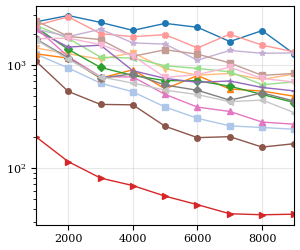}
    \caption{TriviaQA}
    \label{fig:Throughput:TriviaQA}
  \end{subfigure}\hfill
  \begin{subfigure}{0.31\linewidth}
    \includegraphics[width=\linewidth]{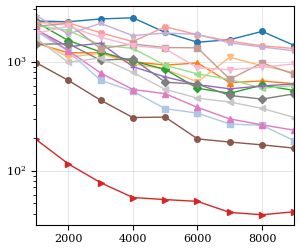}
    \caption{Quora}
    \label{fig:Throughput:Quora}
  \end{subfigure}
  \vspace{0.5em}
  \begin{subfigure}{0.31\linewidth}
    \includegraphics[width=\linewidth]{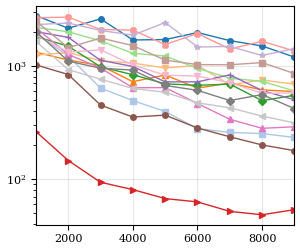}
    \caption{MsMarco}
    \label{fig:Throughput:MsMarco}
  \end{subfigure}\hfill
  \begin{subfigure}{0.31\linewidth}
    \includegraphics[width=\linewidth]{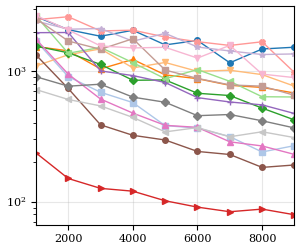}
    \caption{MMLU}
    \label{fig:Throughput:MMLU}
  \end{subfigure}\hfill
  \begin{subfigure}{0.31\linewidth}
    \includegraphics[width=\linewidth]{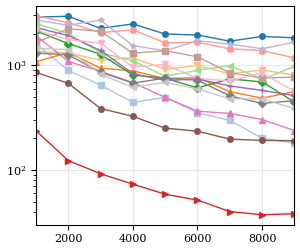}
    \caption{StackOverflow}
    \label{fig:Throughput:StackOverflow}
  \end{subfigure}
\caption{Throughput as a function of cache size - note log scale on the Y-axis}
\label{fig:Throughput}
\end{figure*}

\section{Throughput Analysis}
\label{sec:throughput}

In ~\cref{fig:Throughput} we evaluate the system throughput by measuring the average operations per second (OPS) for the combined query and update cycle across the entire request stream. Since semantic caching inherently requires identifying similar vectors to determine hits, we treat the nearest neighbor (NN) search and the subsequent cache policy logic as an atomic~operation.

Our profiling, as reported in \Cref{fig:Throughput}, indicates that the runtime is primarily dominated by the NN search latency, which is common to all semantic policies. Consequently, throughput remains consistent across different strategies, with all policies operating within the same order of magnitude. This suggests that the computational overhead introduced by the complex bookkeeping of policies like SphereLFU is negligible compared to the cost of the underlying vector retrieval.

To ensure scalability for larger cache sizes, we employed optimized implementations for all baselines. Notably, the offline clairvoyant policies (CRVB, FGRVB, RGRVB) achieve high throughput despite their algorithmic complexity. This is achieved by pre-computing a global pairwise distance matrix during initialization, effectively offloading the expensive distance calculations from the critical path of the evaluation.

FGRVB has the lowest throughput, which is caused by its implementation requiring a complex computation of the marginal gain for each vector per update.

\section{Additional Experiments}
\label{sec:additional-experiments}

We provide supplemental evaluations for varying distance thresholds, specifically $D_{\text{thresh}} = 0.5$ and $0.7$ (corresponding to cosine similarities of $0.88$ and $0.76$, respectively). These more conservative thresholds naturally result in fewer semantic hits per query. This shift diminishes the advantage of SphereLFU, which relies on a high density of neighboring vectors to effectively distribute soft-frequency updates. Conversely, SurprisalLFU exhibits relative robustness at lower thresholds. when most cached items share a baseline frequency of 1, its linguistic surprisal metric provides a more intelligent tie-breaking mechanism for eviction compared to random or recency-based selection.

\begin{figure*}[ht!]
\begin{subfigure}{0.9\linewidth}
    \includegraphics[width=\linewidth]{figures/legend.png}
    \label{fig:legend-0}
\end{subfigure}
  \vspace{0.5em}
  \centering
  \begin{subfigure}{0.31\linewidth}
    \includegraphics[width=\linewidth]{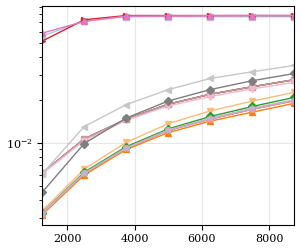}
    \caption{ELI5}
    \label{fig:ELI5}
  \end{subfigure}\hfill
  \begin{subfigure}{0.31\linewidth}
    \includegraphics[width=\linewidth]{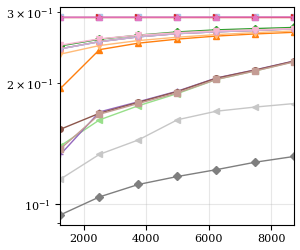}
    \caption{WildChat}
    \label{fig:WildChat}
  \end{subfigure}\hfill
  \begin{subfigure}{0.31\linewidth}
    \includegraphics[width=\linewidth]{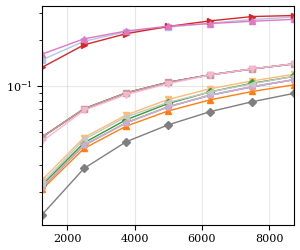}
    \caption{NaturalQuestions}
    \label{fig:NaturalQuestions}
  \end{subfigure}
  \vspace{0.5em}
    \begin{subfigure}{0.31\linewidth}
    \includegraphics[width=\linewidth]{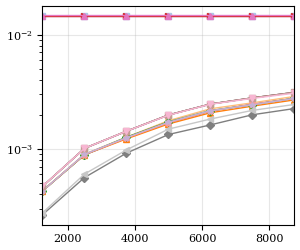}
    \caption{HotPotQA}
    \label{fig:HotPotQA}
  \end{subfigure}\hfill
  \begin{subfigure}{0.31\linewidth}
    \includegraphics[width=\linewidth]{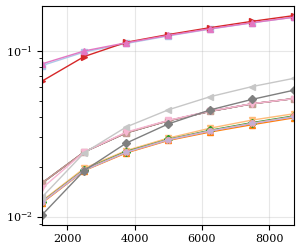}
    \caption{TriviaQA}
    \label{fig:TriviaQA}
  \end{subfigure}\hfill
  \begin{subfigure}{0.31\linewidth}
    \includegraphics[width=\linewidth]{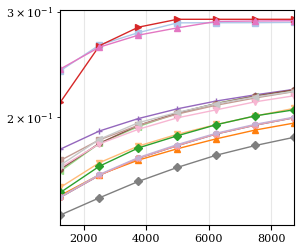}
    \caption{Quora}
    \label{fig:Quora}
  \end{subfigure}
  \vspace{0.5em}
  \begin{subfigure}{0.31\linewidth}
    \includegraphics[width=\linewidth]{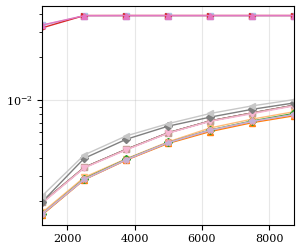}
    \caption{MsMarco}
    \label{fig:MsMarco}
  \end{subfigure}\hfill
  \begin{subfigure}{0.31\linewidth}
    \includegraphics[width=\linewidth]{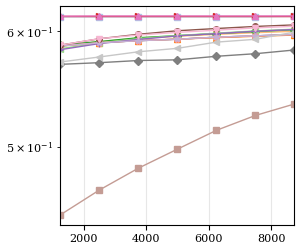}
    \caption{MMLU}
    \label{fig:MMLU}
  \end{subfigure}\hfill
  \begin{subfigure}{0.31\linewidth}
    \includegraphics[width=\linewidth]{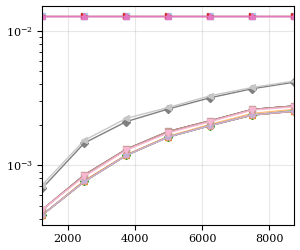}
    \caption{StackOverflow}
    \label{fig:StackOverflow}
  \end{subfigure}
\label{fig:hit-rate}
\caption{Hit rate at $D_{\text{thresh}} = 0.5$ as a function of cache size - note log scale on the Y-axis}
\end{figure*}

\begin{figure*}[ht!]
  \centering
\begin{subfigure}{0.9\linewidth}
    \includegraphics[width=\linewidth]{figures/legend.png}
    \label{fig:legend-1}
\end{subfigure}
  \vspace{0.5em}
  \begin{subfigure}{0.31\linewidth}
    \includegraphics[width=\linewidth]{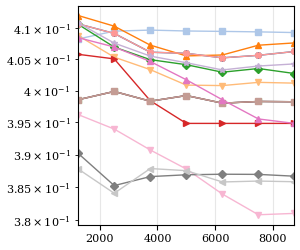}
    \caption{ELI5}
    \label{fig:MHD:ELI5}
  \end{subfigure}\hfill
  \begin{subfigure}{0.31\linewidth}
    \includegraphics[width=\linewidth]{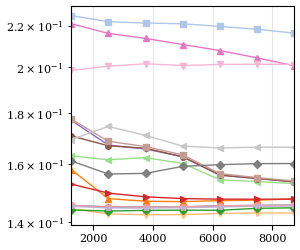}
    \caption{WildChat}
    \label{fig:MHD:WildChat}
  \end{subfigure}\hfill
  \begin{subfigure}{0.31\linewidth}
    \includegraphics[width=\linewidth]{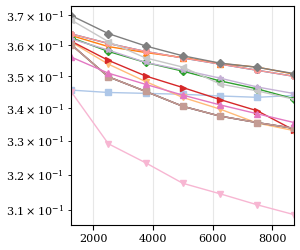}
    \caption{NaturalQuestions}
    \label{fig:fig:MHD:NaturalQuestions}
  \end{subfigure}
  \vspace{0.5em}
    \begin{subfigure}{0.31\linewidth}
    \includegraphics[width=\linewidth]{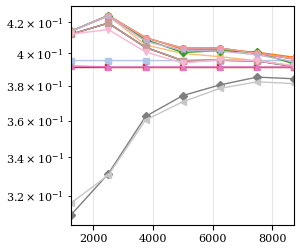}
    \caption{HotPotQA}
    \label{fig:MHD:HotPotQA}
  \end{subfigure}\hfill
  \begin{subfigure}{0.31\linewidth}
    \includegraphics[width=\linewidth]{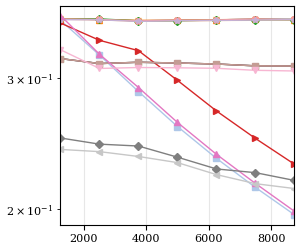}
    \caption{TriviaQA}
    \label{fig:MHD:TriviaQA}
  \end{subfigure}\hfill
  \begin{subfigure}{0.31\linewidth}
    \includegraphics[width=\linewidth]{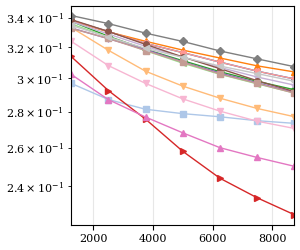}
    \caption{Quora}
    \label{fig:MHD:Quora}
  \end{subfigure}
  \vspace{0.5em}
  \begin{subfigure}{0.31\linewidth}
    \includegraphics[width=\linewidth]{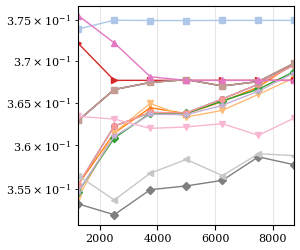}
    \caption{MsMarco}
    \label{fig:MHD:MsMarco}
  \end{subfigure}\hfill
  \begin{subfigure}{0.31\linewidth}
    \includegraphics[width=\linewidth]{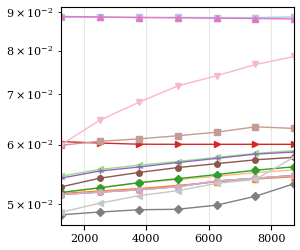}
    \caption{MMLU}
    \label{fig:MHD:MMLU}
  \end{subfigure}\hfill
  \begin{subfigure}{0.31\linewidth}
    \includegraphics[width=\linewidth]{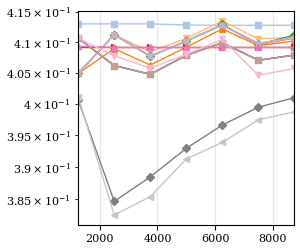}
    \caption{StackOverflow}
    \label{fig:MHD:StackOverflow}
  \end{subfigure}
\label{fig:mean-hit-distance-0.9}
\caption{Mean hit distance at $D_{\text{thresh}} = 0.5$ Distance as a function of cache size - note log scale on the Y-axis}
\end{figure*}

\begin{figure*}[ht!]
\begin{subfigure}{0.9\linewidth}
    \includegraphics[width=\linewidth]{figures/legend.png}
    \label{fig:legend-0}
\end{subfigure}
  \vspace{0.5em}
  \centering
  \begin{subfigure}{0.31\linewidth}
    \includegraphics[width=\linewidth]{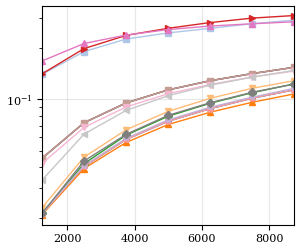}
    \caption{ELI5}
    \label{fig:ELI5}
  \end{subfigure}\hfill
  \begin{subfigure}{0.31\linewidth}
    \includegraphics[width=\linewidth]{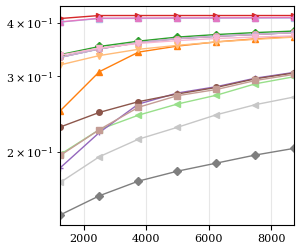}
    \caption{WildChat}
    \label{fig:WildChat}
  \end{subfigure}\hfill
  \begin{subfigure}{0.31\linewidth}
    \includegraphics[width=\linewidth]{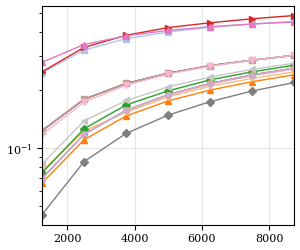}
    \caption{NaturalQuestions}
    \label{fig:NaturalQuestions}
  \end{subfigure}
  \vspace{0.5em}
    \begin{subfigure}{0.31\linewidth}
    \includegraphics[width=\linewidth]{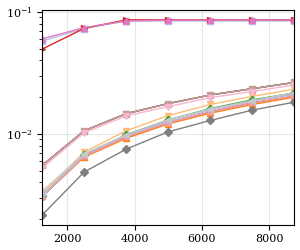}
    \caption{HotPotQA}
    \label{fig:HotPotQA}
  \end{subfigure}\hfill
  \begin{subfigure}{0.31\linewidth}
    \includegraphics[width=\linewidth]{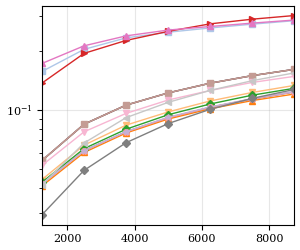}
    \caption{TriviaQA}
    \label{fig:TriviaQA}
  \end{subfigure}\hfill
  \begin{subfigure}{0.31\linewidth}
    \includegraphics[width=\linewidth]{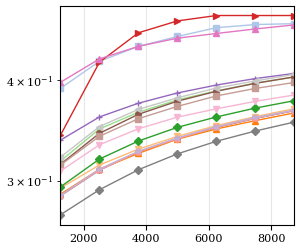}
    \caption{Quora}
    \label{fig:Quora}
  \end{subfigure}
  \vspace{0.5em}
  \begin{subfigure}{0.31\linewidth}
    \includegraphics[width=\linewidth]{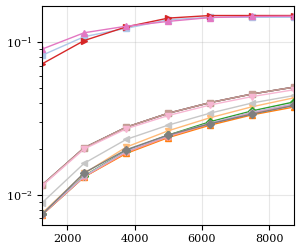}
    \caption{MsMarco}
    \label{fig:MsMarco}
  \end{subfigure}\hfill
  \begin{subfigure}{0.31\linewidth}
    \includegraphics[width=\linewidth]{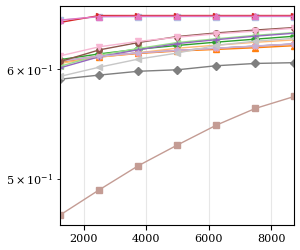}
    \caption{MMLU}
    \label{fig:MMLU}
  \end{subfigure}\hfill
  \begin{subfigure}{0.31\linewidth}
    \includegraphics[width=\linewidth]{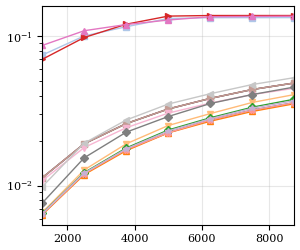}
    \caption{StackOverflow}
    \label{fig:StackOverflow}
  \end{subfigure}
\label{fig:hit-rate}
\caption{Hit rate at $D_{\text{thresh}} = 0.7$ as a function of cache size - note log scale on the Y-axis}
\end{figure*}

\begin{figure*}[ht!]
  \centering
\begin{subfigure}{0.9\linewidth}
    \includegraphics[width=\linewidth]{figures/legend.png}
    \label{fig:legend-1}
\end{subfigure}
  \vspace{0.5em}
  \begin{subfigure}{0.31\linewidth}
    \includegraphics[width=\linewidth]{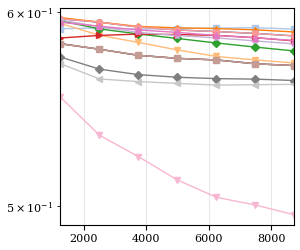}
    \caption{ELI5}
    \label{fig:MHD:ELI5}
  \end{subfigure}\hfill
  \begin{subfigure}{0.31\linewidth}
    \includegraphics[width=\linewidth]{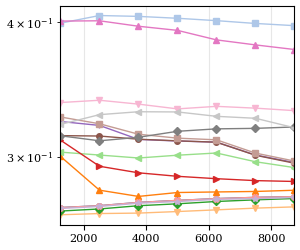}
    \caption{WildChat}
    \label{fig:MHD:WildChat}
  \end{subfigure}\hfill
  \begin{subfigure}{0.31\linewidth}
    \includegraphics[width=\linewidth]{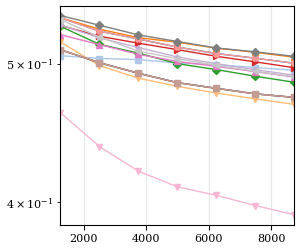}
    \caption{NaturalQuestions}
    \label{fig:fig:MHD:NaturalQuestions}
  \end{subfigure}
  \vspace{0.5em}
    \begin{subfigure}{0.31\linewidth}
    \includegraphics[width=\linewidth]{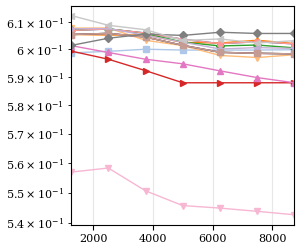}
    \caption{HotPotQA}
    \label{fig:MHD:HotPotQA}
  \end{subfigure}\hfill
  \begin{subfigure}{0.31\linewidth}
    \includegraphics[width=\linewidth]{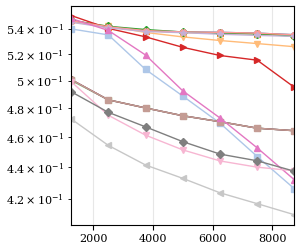}
    \caption{TriviaQA}
    \label{fig:MHD:TriviaQA}
  \end{subfigure}\hfill
  \begin{subfigure}{0.31\linewidth}
    \includegraphics[width=\linewidth]{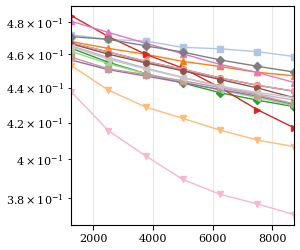}
    \caption{Quora}
    \label{fig:MHD:Quora}
  \end{subfigure}
  \vspace{0.5em}
  \begin{subfigure}{0.31\linewidth}
    \includegraphics[width=\linewidth]{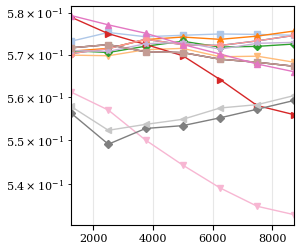}
    \caption{MsMarco}
    \label{fig:MHD:MsMarco}
  \end{subfigure}\hfill
  \begin{subfigure}{0.31\linewidth}
    \includegraphics[width=\linewidth]{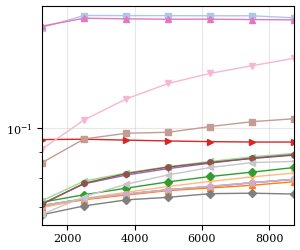}
    \caption{MMLU}
    \label{fig:MHD:MMLU}
  \end{subfigure}\hfill
  \begin{subfigure}{0.31\linewidth}
    \includegraphics[width=\linewidth]{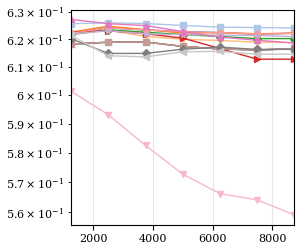}
    \caption{StackOverflow}
    \label{fig:MHD:StackOverflow}
  \end{subfigure}
\label{fig:mean-hit-distance-0.5}
\caption{Mean hit distance at $D_{\text{thresh}} = 0.7$ Distance as a function of cache size - note log scale on the Y-axis}
\end{figure*}

\end{document}